\crefname{algorithm}{mech.}{mechs.}
\Crefname{algorithm}{Mechanism}{Mechanisms}
\newcommand{\citet}[1]{\citeauthor{#1} [\citeyear{#1}]}
\DeclareMathOperator*{\argmax}{argmax}
\newtheorem{proposition}{Proposition}
\newtheorem{corollary}{Corollary}
\theoremstyle{definition}
\newtheorem{definition}{Definition}
\theoremstyle{remark}
\newtheorem*{remark}{Remark}
\newcommand{\start}{s}
\newcommand{\goal}{g}
\newcommand{\agentpath}{\pi}
\newcommand{\cost}{c}
\newcommand{\reportedcost}{\hat{\cost}}
\newcommand{\val}{v}
\newcommand{\reportedval}{\hat{\val}}
\newcommand{\allocrule}{f}
\newcommand{\payment}{p}
\newcommand{\vcgfunc}{h}
\newcommand{\range}{S}
\newcommand{\agenttype}{\tau}
\newcommand{\reportedtype}{\hat{\agenttype}}
\newcommand{\welfare}{w}
\newcommand{\reportedwelfare}{\hat{\welfare}}
\newcommand{\outcome}{d}
\newcommand{\outcomeset}{\mathcal{D}}
\newcommand{\orderingset}{O}
\newcommand{\ordering}{\succ}
\newcommand{\numagents}{n}
\newcommand{\numsamples}{m}
\newcommand{\utility}{u}
\newcommand{\vertex}{v}
\newcommand{\vertexset}{V}
\newcommand{\edgeset}{E}
\newcommand{\timestep}{t}
\title{Scalable Mechanism Design for Multi-Agent Path Finding}
\author{
Paul Friedrich$^{*1,2}$
\and
Yulun Zhang$^{*3}$\and
Michael Curry$^{1,2,4}$\and
Ludwig Dierks$^{2,5}$\and
Stephen McAleer$^3$\and
Jiaoyang Li$^3$\and
Tuomas Sandholm$^{3,6}$\And
Sven Seuken$^{1,2}$\\
\affiliations
$^1$ETH AI Center\\
$^2$University of Zurich\\
$^3$Carnegie Mellon University\\
$^4$Harvard University\\
$^5$University of Illinois at Chicago\\
$^6$Optimized Markets, Strategy Robot, Strategic Machine
\emails
paul.friedrich@uzh.ch, yulunzhang@cmu.edu
}
\begin{document}

\maketitle

\def\thefootnote{*}\footnotetext{These authors contributed equally.}\def\thefootnote{\arabic{footnote}}

\begin{abstract}

\emph{Multi-Agent Path Finding (MAPF)} involves determining paths for multiple agents to travel simultaneously and collision-free through a shared area toward given goal locations.
This problem is computationally complex, especially when dealing with large numbers of agents, as is common in realistic applications like autonomous vehicle coordination. Finding an optimal solution is often computationally infeasible, making the use of approximate, suboptimal algorithms essential. Adding to the complexity, agents might act in a self-interested and strategic way, possibly misrepresenting their goals to the MAPF algorithm if it benefits them. Although the field of mechanism design offers tools to align incentives, using these tools without careful consideration can fail when only having access to approximately optimal outcomes. 
In this work, we introduce the problem of scalable mechanism design for MAPF and propose three strategyproof mechanisms, two of which even use approximate MAPF algorithms. We test our mechanisms on realistic MAPF domains with problem sizes ranging from dozens to hundreds of agents. We find that they improve welfare beyond a simple baseline.\footnote{Code at \url{https://github.com/lunjohnzhang/MAPF-Mechanism}} 
\end{abstract}

\section{Introduction} 
There are numerous emerging domains of large-scale resource allocation problems, such as allocating road capacity to cars or 3D airspace to unmanned aerial vehicles (UAVs), where \emph{multi-agent path finding (MAPF)} is required to calculate a valid allocation. 
In realistically-sized instances of such problems, non-colliding paths must be calculated for large numbers of agents simultaneously. Such computationally challenging domains are increasingly of real-world importance. Future UAV traffic in urban areas from parcel deliveries alone is projected to require managing tens of thousands of concurrent flight paths~\cite{doole2020estimation}.

Most MAPF research assumes that the true agent \emph{preferences} (e.g., cost for moving on the map) are known to (or even set by) the allocating system and focuses on finding computationally efficient ways to obtain solutions that minimize an aggregated agent cost measure, called \emph{optimal} solutions.
However, if agents are \emph{self-interested} and independently report their preferences to the system, it may be beneficial for them to \emph{misreport}. For example, an agent may wish to exaggerate her cost of a delay to ensure that the algorithm favors her over an agent that reported a lower cost when resolving a potential conflict.
In this \emph{non-cooperative} setting, the system may not know the agents' true preferences since they could be misreported. In that case, there is no guarantee that its MAPF algorithm finds a solution that maximizes true social welfare.

The field of \textit{mechanism design} deals with exactly this problem: designing systems that efficiently allocate scarce resources while ensuring that they are \emph{individually rational} and \emph{strategyproof}. These desirable properties guarantee that participating in the mechanism always benefits each agent and that agents are incentivized to report their private information truthfully to the mechanism, respectively. 
The celebrated \emph{Vickrey-Clarke-Groves (VCG)} mechanism~\cite{Vickrey1961Counterspeculation,Clarke1971Multipart,Groves1973Incentives} achieves this goal. 
It chooses an allocation that perfectly maximizes welfare given agents' reported preferences and then charges side payments to each agent. Via a careful choice of payments, VCG ensures that there is no incentive for agents to lie about their preferences. 

However, mechanism design has so far either not been applied to large-scale routing problems or has bypassed path finding, e.g., via congestion pricing in road domains~\cite{beheshtian2020bringing}. 
The main reason is that even classic, cooperative MAPF problems are computationally extremely challenging. Finding an optimal solution is NP-hard \cite{yuStructureIntractabilityOptimal2013}.
The state-of-the-art MAPF algorithm producing optimal solutions is \emph{conflict-based search (CBS)}~\cite{SHARON201540}. CBS first generates a shortest path for each agent and then resolves conflicts (path collisions) iteratively, building a constraint tree in a best-first-search manner. Its worst-case runtime scales exponentially in the number of agents, and identifying why certain scenarios are computationally feasible to optimally solve while others are not is an open area of research~\cite{gordonRevisitingComplexityAnalysis2021}. 

Much of the MAPF literature therefore employs (bounded) suboptimal algorithms or even counts finding any feasible path assignment as a success.
While there exist a number of suboptimal MAPF algorithms that work very well in practice as long as agent preferences are fully known~\cite{barerSuboptimalVariantsConflictBased2014,liEECBSBoundedSuboptimalSearch2021}, most are unsuitable for a mechanism with self-interested agents.
This is because  VCG-based mechanisms rely on finding an optimal assignment -- if the assignment is only approximately optimal, the resulting mechanism is generally no longer strategyproof~\cite{sandholm2002emediator}.

In this paper, we show how we can design suboptimal but more scalable MAPF algorithms that are made strategyproof through VCG-based payments by ensuring that they choose optimally from some restricted, fixed set of outcomes. This is called the \textit{maximal-in-range (MIR)} property~\cite{nisan2007computationally}, which guarantees that charging VCG payments results in no incentive to lie.

The rest of this paper is organized as follows. After giving an overview of related work (Section \ref{SEC:RW}), we introduce our formulation of the non-cooperative MAPF problem and illustrate the approach of pairing an optimal MAPF algorithm with VCG payments to achieve strategyproofness (Section \ref{SEC:MODEL}). 

In Section \ref{SEC:MECHA}, we first show how this approach applied to the optimal CBS algorithm results in a welfare-maximizing and strategyproof mechanism, which we call the \emph{payment-CBS (PCBS)} mechanism.

As we are interested in significantly larger problems, we successively restrict the search space to increase scalability.  
We focus on the class of \emph{prioritized} MAPF algorithms \cite{silverCooperativePathfinding2005}.
Instead of searching for a jointly optimal allocation, these algorithms search through a space of \emph{priority orderings}. They plan paths for agents sequentially, such that each planned path avoids all paths that belong to agents with higher priority. They are fast and efficient in practice but have no (bounded) suboptimality guarantees, and one can construct MAPF instances that cannot be solved by them. 

With this restriction, we first introduce the \emph{exhaustive-PBS (EPBS)} mechanism, which uses an allocation rule based on an adapted version of \emph{priority-based search (PBS)}~\cite{MaAAAI19}. 
It constructs a priority ordering, that is, a strict partial order on the set of agents, by iteratively adding a binary priority relation and re-planning accordingly for pairs of agents that conflict. By exhaustively expanding the PBS search tree, EPBS fulfills the MIR property.
EPBS illustrates how a suboptimal MAPF algorithm can be made strategyproof using MIR and our variation of VCG payments, which can be calculated without additional computational cost.

In a second step, we further restrict the search space to a fixed number of randomly drawn \emph{total} priority orderings. The resulting mechanism, which we call \emph{Monte-Carlo prioritized planning (MCPP)}, chooses the welfare-maximizing allocation among those resulting from applying \emph{prioritized planning (PP)} \cite{erdmann1987multiple} to the randomly drawn orderings.
In addition to being strategyproof through the combination of the MIR property with our VCG-based payments, MCPP can trade off optimality and scalability by varying the number of sampled priority orderings.

In summary, we present three strategyproof path allocation mechanisms that do not rely on computationally costly combinatorial auctions and model self-interested agents in the standard MAPF setting. We prove that applying the maximal-in-range property enables the use of suboptimal MAPF algorithms in strategyproof mechanisms.
In experiments across large MAPF benchmarks, we show that such mechanisms show a significant computational speedup compared to solutions relying on optimal MAPF algorithms and outperform a na{\"i}ve strategyproof baseline in terms of agent welfare.

\section{Related Work}\label{SEC:RW}
Mechanism design is used in a wide range of domains where goods must be allocated to self-interested agents. 
Example domains which have seen high-profile implementations include electricity markets~\cite{Cramton2017Electricity}, spectrum auctions~\cite{Cramton1997FCC}, course allocation at universities~\cite{
budish2011combinatorial,othmanFindingApproximateCompetitive2010}, vaccine allocation \cite{castilloMarketDesignAccelerate2021} and large-scale sourcing auctions, for instance for transportation services~\cite{sandholm2013very}. Given some chosen allocation rule, depending on the rule's structure, it may be possible to design a payment rule such that every agent has a dominant strategy to report her preferences truthfully, no matter what the other agents report~\cite{Myerson1981Optimal}.

Mechanism design where the allocated goods are paths usually fails to capture important characteristics of the MAPF domain. Ridesharing \cite{maSpatioTemporalPricingRidesharing2022} matches riders looking to go from A to B with drivers and aligns incentives for both sides with payments. Road pricing~\cite{beheshtian2020bringing} facilitates the optimally efficient movement of agents through a transportation network by charging higher payments for travel in areas with high congestion. Both approaches leave the decision on what route to take up to individual agents, meaning that collisions are not considered and cannot be prevented. \emph{UAV traffic management (UTM)} \cite{seukenMarketDesignDrone2022} is an emerging domain that combines heterogeneous, self-interested agents with explicit path allocations. Recent work in UTM focuses on resolving conflicts between agents, but offloads computational cost to the agent by requiring her to produce and submit her intended movement paths to the mechanism beforehand \cite{duchampAirTrafficDeconfliction2019}.

Prior research that explicitly models the MAPF domain has mostly focused on the \emph{cooperative} case \cite{sternMultiAgentPathfindingDefinitions2019}, with work on the \emph{non-cooperative} 
case just emerging and focusing on auction-based mechanisms. \citet{dasOMCoRPOnlineMechanism2021} assume a 2D grid topology with intersection vertices and solve the online problem by running separate VCG auctions at each intersection. \citet{amirMultiAgentPathfindingCombinatorial2015} model paths as \emph{bundles} of vertices, and use an iterative combinatorial auction mechanism called iBundle to sell non-colliding paths to self-interested agents in a way that achieves strategyproofness at the cost of poor scalability. \citet{gautierNegotiatedPathPlanning2022} build on their work with a suboptimal mechanism that achieves a computational speed-up at the cost of losing strategyproofness. \citet{chandra2023socialmapf}\footnote{While the paper claims strategyproofness, the relevant analysis is restricted to myopic agents only considering the next edge. Given that they employ sequential auctions for moving multiple edges, incentives do not translate to the full MAPF problem.} provide a planning mechanism that simplifies the strategy space by only having conflicting agents bid for movement priority in decentralized, repeated VCG auctions. We go a step further by reducing bid complexity for agents from valuing paths or priorities to merely providing values for a successful arrival and for their time. Furthermore, we improve on the high computational complexity of (VCG) auction-based approaches while still achieving their desired strategyproofness by using more scalable, established MAPF algorithms. Lastly, we use the classic MAPF formulation for the allocation problem and do not assume any additional topology on our graphs.
Research in prioritized-based methods for MAPF has used randomization to generate initial orderings that are then modified by the algorithm~\cite{BENNEWITZ200289}. However, unlike ours, this line of research does not consider incentives or produce strategyproof mechanisms.

\section{Problem Setup}\label{SEC:MODEL}
We consider the problem of assigning paths through a graph $(V,E)$ to $\numagents$ self-interested agents. For example, for UAV airspace assignment, each agent is an individual UAV operator, while the graph represents the assignable airspace.  

Each agent $i$ has a start vertex $\start_i$ and a goal vertex $\goal_i$ that she wants to reach as quickly as possible.  Time is discretized into timesteps indexed by $\timestep$. At each timestep, every agent can either move to an adjacent vertex or wait at its current vertex, both of which incurs \emph{cost} $\cost_i$. Additionally, each agent has some \emph{value} $\val_i$ for arriving at her goal vertex $\goal_i$. We also call the 4-tuple $\agenttype_i = (\start_i, \goal_i, \cost_i, \val_i)$ the agent's \emph{type}. A \emph{path} $\agentpath_i$ for agent $i$ is a sequence of vertices that are pairwise adjacent or identical. 
At the start, agents can wait at a private location (the \emph{garage vertex}) for a number of timesteps where they do not conflict with other agents and from which the only movement option is to their start vertex $\start_i$. A path begins at the garage or the agent's start vertex $\start_i$ and ends at her goal vertex $\goal_i$. Once an agent reaches her goal, she disappears, and her goal vertex becomes free for other agents. The cost of a path $c(\agentpath_i)$ is defined as $\cost_i |\agentpath_i|$. The \emph{welfare} of an agent represents her satisfaction with a path which she is assigned and is given by her value for arrival minus cost of traveling along the path, $\welfare_i(\agentpath_i) = \val_i - \cost_i(\agentpath_i)$. The sum of all agents' welfare is called \emph{social welfare}. To represent the welfare of all agents excluding $i$, we write $\welfare_{-i} = \sum_{j \neq i} \welfare_j(\agentpath_j).$
We further allow for the assignment of an empty path $\agentpath_i =\emptyset$ to any agent, which does not carry any value or cost.

Our theoretical results require the ability of our mechanisms to find feasible solutions for all instances and priority orderings given. Assuming either ``empty paths'' or ``garage vertex \& disappear-at-target'' is sufficient. The latter assumption makes the MAPF instance \emph{well-formed} \cite{capPrioritizedPlanningAlgorithms2015}, guaranteeing that prioritized algorithms (like EPBS and MCPP) are complete, i.e. always find feasible solutions. One of the two assumptions can be dropped at the cost of individual rationality or lower empirical scalability, respectively.

Our mechanisms can work with cost functions that use information other than $|\agentpath_i|$, as long as costs $\cost_i$ are independent of other agents' assignments. If there is such a dependence, all proposed algorithms stop being well-defined as they use single-agent best paths that no longer exist in isolation.

We define two types of conflict: A \emph{vertex conflict} $\langle i,j,\vertex,\timestep \rangle$ occurs when agents $i$ and $j$ attempt to occupy vertex $\vertex \in \vertexset$ at the same timestep $\timestep$; and an \emph{edge conflict} $\langle i, j,\utility,\vertex,\timestep \rangle$ occurs when agents $i$ and $j$ attempt to traverse the same edge $(u,v) \in \edgeset$ in opposite directions at the same time (between $\timestep-1$ and $\timestep$). 
A \emph{feasible assignment} is a set $\outcome = \lbrace \agentpath_1, \ldots, \agentpath_n \rbrace$ of conflict-free paths, one for
each agent. An \emph{optimal assignment} is the feasible assignment $\outcome^\ast$ that produces the maximum social welfare, $\outcome^\ast = \argmax_\outcome \sum_{i} \welfare_i(\agentpath_i^d)$.  
Thus we enhance the standard MAPF problem formulation, which aims to minimize \emph{flowtime} or \emph{makespan}, meaning the sum or the maximum of the arrival times of all agents at their goal vertices. 
Modeling all agents as having a value of zero and cost for time of one turns our social welfare objective into flowtime minimization, and further changing $\Sigma_i \welfare_i$ to $\min_i \welfare_i$ results in makespan minimization, both at no runtime cost.
Since both also remove the ability for agents to misreport and thus the need for strategyproofness, we omit their discussion.

In contrast to the classic MAPF problem, we assume that agents' types are private and not \emph{a priori} known to the mechanism that assigns the paths. Instead, each agent reports her type to the mechanism. We denote by $\reportedtype= (\reportedtype_1, \ldots, \reportedtype_n)$ a set of reported types that serve as the mechanism input, and by $\agentpath(\reportedtype)$ the path assignment that the mechanism outputs. 
We assume agents are self-interested and may misreport a type $\reportedtype_i\not=\agenttype_i$ in such a way as to ensure that the mechanism selects an outcome that is favorable to them. A (social) welfare optimizing mechanism, for instance, finds the outcome which maximizes the sum of \emph{reported agent welfare}. This outcome may be arbitrarily suboptimal for other individual agents (hurting egalitarian social welfare, that is, fairness) or the population as a whole (hurting social welfare, that is, efficiency). We assume that agents do not misreport their start and goal vertices, as they would have negative infinite welfare for receiving a path that does not take them from their start to their goal, reducing the agent's strategic choices to $\cost_i$ and $\val_i$.

To align incentives, we assume the mechanism may charge \emph{payment} $\payment_i(\reportedtype)$ to agent $i$ upon assigning them a path. Consequently, an agent's \emph{utility} is given by the welfare of her assignment minus the payment she is charged,
\begin{align*}
    \utility_i(\reportedtype) =  \val_i - \cost_i \bigl(\agentpath_i(\reportedtype)\bigr) - \payment_i(\reportedtype).
\end{align*}
In a slight abuse of notation, we also write $\utility_i(\reportedtype_i,\reportedtype_{-i})$ for the utility resulting from agent $i$ reporting $\reportedtype_i$ if the set of reports from the remaining agents is $\reportedtype_{-i}$. 
We assume that given any $\reportedtype_{-i}$, any agent $i$ will report the type $\reportedtype_i$ that maximizes her utility $\utility_i(\reportedtype_i,\reportedtype_{-i})$. 
To facilitate our analysis,  we restrict our attention to \emph{strategyproof (SP)} mechanisms, where it is a dominant strategy for agents to report their true type $\agenttype_i$, independent of the reports of the other agents $\reportedtype_{-i}$.

Additionally, we focus on \emph{individually rational (IR)} mechanisms, that is, mechanisms that guarantee every agent (weakly) positive utility $\utility_i(\reportedtype)\geq0$. Such mechanisms guarantee that agents are incentivized to participate in the mechanism, as they cannot lose utility doing so~\cite{Krishna2009Auction}.

\paragraph{Summary}
    We aim to find a mechanism that, given a set of reports, outputs an assignment with a valid path (or no path) for each agent, such that it \emph{maximizes welfare} and that the mechanism is \emph{strategyproof} and \emph{individually rational}.

\subsection{Mechanism Design Desiderata for MAPF}
We now explore how to achieve two key desiderata: strategyproofness and individual rationality. 

For mechanisms such as our payment-CBS (PCBS) which find optimal allocations,  strategyproofness can be guaranteed by charging agents VCG payments. However, for suboptimal mechanisms such as our exhaustive-PBS (EPBS) and Monte-Carlo prioritized planning (MCPP), an additional property called \emph{maximal-in-range} has to be satisfied. We show that this property is sufficient to guarantee strategyproofness when combined with a variation on VCG payments, which we call \emph{VCG-based payments.}

\begin{definition}  
    An allocation rule $\allocrule$ satisfies the \emph{maximal-in-range property (MIR) with range $\range$}, if there exists a fixed subset $\range$ of all possible allocations $\outcomeset$ such that for all possible reports of misreportable information $\reportedtype=(\reportedtype_1, \ldots, \reportedtype_\numagents)$, the allocation rule chooses the outcome in the range $\range$ which results in the highest reported welfare \emph{(``the outcome which maximizes reported welfare over $\range$'')}. 
    In other words, iff
    $$\exists \range \subseteq \outcomeset: \forall \reportedtype: \allocrule(\reportedtype) = \argmax_{\outcome \in S} \sum_{i=1}^\numagents \reportedwelfare_i(\outcome).$$
\end{definition}

Our EPBS and MCPP mechanisms use VCG payments with one variation. Let $\outcome^\ast$ be the assignment which maximizes social welfare within the mechanism's allocation rule's range $\range$, that is, ${\outcome^\ast := \argmax_{\outcome \in S} \sum_{i} \reportedval_i - \reportedcost_i(\agentpath_i^\outcome)}$. The VCG approach also requires calculating the \emph{counterfactual} assignment $\outcome^\ast_{-i}$ which maximizes social welfare in the hypothetical problem instance where agent $i$ is not present. 

\begin{definition}
    The \emph{VCG-based payment rule} charges to agent $i$ the difference in total reported welfare of all agents excluding $i$ of the factual and counterfactual welfare-maximizing assignment: 
$\payment_i(\reportedtype):= \reportedwelfare_{-i}(\outcome^*_{-i})-\reportedwelfare_{-i}(\outcome^*).$
\end{definition}
Intuitively, the payment represents the amount by which agent $i$'s presence has caused a worse assignment and lowered social welfare for all other agents, also called agent $i$'s \emph{externality}. Typically, calculating all $\outcome^\ast_{-i}$ requires solving the underlying optimization problem once per agent for an altered problem instance where that agent is not present. This is extremely costly in the MAPF domain. Our EPBS and MCPP mechanisms use a slightly different $\outcome^\ast_{-i}$, which sidesteps this requirement and ensures that payments are non-negative. They take all assignments within their range $\range$, set agent $i$'s value and cost to 0 but leave the assignment's paths unchanged (i.e., agent $i$'s path is still present), and select the so altered assignment with the highest welfare. Equivalently, ${\outcome^\ast_{-i} = \argmax_{\outcome \in S} \sum_{j \neq i} \reportedval_j - \reportedcost_j(\agentpath_j^\outcome).}$ All counterfactual assignments are thus drawn from the range that was already explored when finding $\outcome^\ast$, avoiding costly recalculation.

To ensure \emph{individual rationality}, or non-negative utilities for all agents, our mechanisms will not assign paths to agents if the path's cost exceeds the agent's value. If an agent is assigned a path whose cost is greater than the value the agent would gain from completing it, i.e., if $\cost_i |\agentpath_i| > \val_i$, we cap the agent's cost for that path at $\val_i$ and assume that agent $i$ will choose not to move at all. Formally, we define $\cost_i(\agentpath_i) := \min(\val_i,\cost_i|\agentpath_i|)$ such that an agent's welfare becomes $\welfare_i := \val_i - \cost_i(\agentpath_i)=\max(0,\val_i - \cost_i|\agentpath_i|)$.
If an agent is assigned a path for which she would have negative welfare and decides not to take that path due to individual rationality, our adjustment sets her welfare to zero.\footnote{MCPP and EPBS treat empty, 0-welfare paths as space unavailable to lower priority agents, a necessary compromise to avoid positive externalities and enable computational cheapness of payments.} Then 
$$\outcome^\ast = \argmax_{\outcome\in\range} \sum_i \reportedwelfare_i(\outcome)=\argmax_{\outcome\in\range} \sum_{j\neq i} \reportedwelfare_i(\outcome) = \outcome_{-i}^\ast,$$
which implies that $\payment_i = 0$, and since $\welfare_i = 0$ also $\utility_i = 0$.

\begin{restatable}{lemma}{mir}\label{thm:mir}
    Let $(\allocrule,\payment)$ be a mechanism consisting of a path allocation rule $\allocrule$ that satisfies MIR with range $\range$ and the VCG-based payment rule $\payment$. Then $(\allocrule,\payment)$ is strategyproof.
\end{restatable}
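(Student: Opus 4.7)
The plan is to adapt the classical VCG incentive argument to the paper's modified counterfactual $\outcome^*_{-i}$. I would fix an agent $i$ with true type $\agenttype_i$ and arbitrary reports $\reportedtype_{-i}$ from the other agents, write $\outcome^* := \allocrule(\reportedtype_i, \reportedtype_{-i})$, and show that reporting $\reportedtype_i = \agenttype_i$ maximizes $\utility_i$.

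The first key observation is that $\outcome^*_{-i}$ does not depend on $\reportedtype_i$: by MIR the range $\range$ is fixed once and for all, and the objective $\sum_{j \neq i}\reportedval_j - \reportedcost_j(\agentpath_j^{\outcome})$ defining $\outcome^*_{-i}$ omits agent $i$'s reported terms entirely, so neither the feasible set nor the objective changes when $\reportedtype_i$ varies. With that in hand, substituting the VCG-based payment into agent $i$'s utility and rearranging yields
\begin{align*}
\utility_i(\reportedtype_i,\reportedtype_{-i}) &= \welfare_i(\outcome^*) - \bigl[\reportedwelfare_{-i}(\outcome^*_{-i}) - \reportedwelfare_{-i}(\outcome^*)\bigr] \\
&= \bigl[\welfare_i(\outcome^*) + \reportedwelfare_{-i}(\outcome^*)\bigr] - \reportedwelfare_{-i}(\outcome^*_{-i}),
\end{align*}
and since the final term is constant in $\reportedtype_i$, maximizing $\utility_i$ reduces to maximizing $\welfare_i(\outcome^*) + \reportedwelfare_{-i}(\outcome^*)$.

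Next I would compare truthful reporting against any deviation. Under $\reportedtype_i = \agenttype_i$ we have $\welfare_i \equiv \reportedwelfare_i$, so $\welfare_i(\outcome^*) + \reportedwelfare_{-i}(\outcome^*) = \sum_j \reportedwelfare_j(\outcome^*)$, which by MIR is exactly the quantity that $\allocrule$ maximizes over $\range$ at the truthful profile. For any deviating report producing allocation $\outcome' \in \range$, the same MIR-maximization delivers $\welfare_i(\outcome^*) + \reportedwelfare_{-i}(\outcome^*) \geq \welfare_i(\outcome') + \reportedwelfare_{-i}(\outcome')$, so truth-telling is a dominant strategy.

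The main subtlety I anticipate is justifying the invariance of $\outcome^*_{-i}$ in $\reportedtype_i$ under the paper's non-standard counterfactual, which reuses the range $\range$ and leaves agent $i$'s path in place rather than re-solving the instance without her; this is the step that lets the telescoping go through and distinguishes the argument from the textbook VCG case. The IR cap $\cost_i(\agentpath_i) := \min(\val_i, \cost_i|\agentpath_i|)$ does not interfere, since it is applied deterministically to the submitted reports and therefore does not create any new profitable deviation.
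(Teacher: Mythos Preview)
Your proposal is correct and follows essentially the same approach as the paper: the paper first records that the payment has the VCG form with $h_i(\reportedtype_{-i}) = \sum_{j\neq i}\reportedwelfare_j(\outcome^*_{-i})$ (which is exactly your observation that $\outcome^*_{-i}$ is independent of $\reportedtype_i$), then applies the standard VCG telescoping argument over the range $\range$ in place of $\outcomeset$. The only structural difference is that the paper factors out the telescoping step as a separate preliminary proposition about VCG mechanisms, whereas you do the computation inline.
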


\noindent The proof is in the appendix and follows \citeauthor{nisan2007computationally}'s (2007) argumentation using our domain-specific adaptations.

\begin{restatable}{lemma}{irnegpayments}\label{thm:ir-negpayments}
    Let $(\allocrule,\payment)$ be a mechanism consisting of a path allocation rule $\allocrule$ and the VCG-based payment rule $\payment$. Let $c_i(\agentpath_i) = \max(\val_i, \cost_i|\agentpath_i|)$ as above. Then
    \begin{enumerate}[label=(\roman*)]
        \item The mechanism $(\allocrule,\payment)$ is \emph{individually rational}.
        \item For all agents $i$ and reported types $\reportedtype$, $\payment_i(\reportedtype) \geq 0$.\\ That is, the mechanism has \emph{no negative payments}.
    \end{enumerate}
\end{restatable}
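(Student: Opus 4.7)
The plan is to prove (ii) first---non-negative payments follow almost immediately from the paper's modified definition of $\outcome^*_{-i}$---and then leverage (ii) together with the maximal-in-range property to obtain (i).

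For (ii), I would argue directly from the modified counterfactual rule. By definition, $\outcome^*_{-i} = \argmax_{\outcome \in \range} \sum_{j \neq i} \reportedwelfare_j(\outcome)$ is chosen over the \emph{same} fixed range $\range$ used to select $\outcome^*$. Since $\outcome^* \in \range$, we immediately get $\reportedwelfare_{-i}(\outcome^*_{-i}) \geq \reportedwelfare_{-i}(\outcome^*)$, so $\payment_i(\reportedtype) = \reportedwelfare_{-i}(\outcome^*_{-i}) - \reportedwelfare_{-i}(\outcome^*) \geq 0$.

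For (i), by Lemma~\ref{thm:mir} the mechanism is strategyproof, so it suffices to verify IR at truthful reports, where $\welfare \equiv \reportedwelfare$. Expanding: $\utility_i = \welfare_i(\outcome^*) - \payment_i = \sum_j \welfare_j(\outcome^*) - \welfare_{-i}(\outcome^*_{-i})$. MIR applied to $\outcome^*$, together with $\outcome^*_{-i} \in \range$, yields $\sum_j \welfare_j(\outcome^*) \geq \sum_j \welfare_j(\outcome^*_{-i})$, so $\utility_i \geq \welfare_i(\outcome^*_{-i})$. The cost cap (which I read as $\cost_i(\agentpath_i) = \min(\val_i, \cost_i|\agentpath_i|)$, matching the earlier paragraph of the paper) then gives $\welfare_i(\outcome^*_{-i}) = \max(0, \val_i - \cost_i|\agentpath_i^{\outcome^*_{-i}}|) \geq 0$, closing the argument.

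The main conceptual hurdle is checking that the paper's departure from standard VCG---computing $\outcome^*_{-i}$ over the fixed range $\range$ rather than by re-solving MAPF without agent $i$---does not break either property. Non-negativity of payments hinges on exactly this modification, since $\outcome^*_{-i}$ is now the range-restricted re-optimizer of $\sum_{j \neq i} \reportedwelfare_j$; and IR then reduces to the standard VCG-style inequality chain, with the cost cap playing the role of a participation constraint that covers both the ``agent accepts the path'' and ``agent opts out'' cases uniformly. Little more than bookkeeping is needed beyond these two observations.
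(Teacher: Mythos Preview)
Your proof is correct and follows essentially the same route as the paper: part (ii) is the one-line observation that $\outcome^* \in \range$ so $\reportedwelfare_{-i}(\outcome^*_{-i}) \geq \reportedwelfare_{-i}(\outcome^*)$, and part (i) is the same two-step chain $\utility_i = \sum_j \welfare_j(\outcome^*) - \welfare_{-i}(\outcome^*_{-i}) \geq \welfare_i(\outcome^*_{-i}) \geq 0$, using maximality of $\outcome^*$ and then the cost cap (the paper just applies these two ingredients in the opposite order). One minor remark: your appeal to Lemma~\ref{thm:mir} to ``reduce to truthful reports'' is unnecessary---individual rationality is \emph{defined} at the truthful profile, so you may simply set $\reportedwelfare = \welfare$ without invoking strategyproofness.
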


\noindent The proof is in the appendix.

A welfare-maximizing MAPF algorithm fulfills MIR as long as two conditions are met. First, an agent cannot change the range of assignments explored by the algorithm by misreporting. Second, the algorithm breaks ties between equal-length paths in a way that is independent of the agents' reports (which we assume). Together, they ensure that the mechanism constructs the same range for all possible agent reports.

Standard suboptimal MAPF algorithms like EECBS \cite{liEECBSBoundedSuboptimalSearch2021} may be a natural first target in search of scalability. Yet, none, to our knowledge, fulfill the MIR property, as their search tree expansion makes use of heuristics that 
agents can influence by misreporting their cost. 
Both of our suboptimal mechanisms use priority-based allocation rules. We show that this class of MAPF algorithms can be altered in natural ways to achieve MIR and thus strategyproofness.

\section{Mechanisms}\label{SEC:MECHA}
We present three mechanisms that solve the classical MAPF problem formulation for self-interested agents. All three fulfill the mechanism design desiderata of \emph{strategyproofness}, \emph{individual rationality} and \emph{no negative payments}.

\subsection{Payment-CBS (PCBS)}
PCBS constitutes our optimal benchmark mechanism. Given reported agent types, it first finds a reported social welfare maximizing allocation $\outcome^\ast \in \outcomeset$ using \emph{conflict-based search (CBS)}~\cite{SHARON201540} and then charges agents VCG payments. As it fulfills the definition of a VCG mechanism, it is \emph{strategyproof} (see Proof of \Cref{thm:mir}), and by \Cref{thm:ir-negpayments} it is \emph{individually rational} and charges \emph{no negative payments}. 

While EPBS and MCPP compute all outcomes within their range before selecting the welfare-maximal one, PCBS does not. To find the VCG payment for an agent, CBS must be run on the setting excluding that agent in order to obtain the counterfactual $\outcome_{-i}^\ast$. In total, the mechanism runs CBS once to find the optimal allocation $\outcome^\ast$ and again for each $i=1,\ldots,\numagents$ to compute $\outcome_{-i}^\ast$ and thus $\payment_i$. However, these runs could occur in parallel, reducing PCBS's real runtime to that of regular CBS' (given sufficiently many CPUs). While it thus shares a level of limited scalability with the main optimal, strategyproof benchmark of iBundle \cite{amir2014empirical}, PCBS eliminates iBundle's prohibitive communication complexity.

\subsection{Exhaustive-PBS (EPBS)}
Our EPBS mechanism, derived from \emph{priority-based search (PBS)}~\cite{MaAAAI19}, illustrates how a suboptimal MAPF algorithm can be adapted to fulfill the MIR property. Using a two-stage algorithm it constructs a \emph{priority ordering}, i.e. a strict partial order on the set of agents. Starting from an empty priority ordering and a set of individually optimal but colliding paths, it builds a search tree where each node contains a priority ordering and a movement plan with its cost and list of collisions. It iteratively expands a node, picks a collision between two agents and resolves it by expanding into two child nodes, which each add one of the two possible priority relations between the two agents to its priority ordering, ensuring that the priority ordering stays transitive. Paths are re-planned using the new priority ordering with a single-agent low-level search such as $A^\ast$, collisions are tracked, welfare is calculated, and the next node is expanded. Once all nodes are expanded to collision-free leaves, EPBS terminates and returns the highest welfare leaf node's allocation. In contrast, PBS uses depth-first search, expanding the highest-welfare child and terminating once a single node is collision-free.

To achieve welfare-maximizing allocations with certainty, we would in principle need to search through, and construct assignments for, all possible priority orderings. Using the MIR property, we reduce the size of our search space to just the leaves of EPBS's search tree. The search tree's depth is $\mathcal{O}(n^2)$, as each node expansion adds one pair of agents to the priority ordering and no branch can add the same pair twice.

While finding the allocation $\outcome^\ast$ in the worst case thus scales exponentially with the number of agents, by fully expanding its search tree EPBS calculates all outcomes within its range in one go. Due to the variation we make in defining VCG-based payments, the range includes all counterfactuals $\outcome_{-i}^\ast$. Thus, to run the mechanism, meaning to find $\outcome^\ast$ and calculate all payments $\payment_i$, the search tree needs to be constructed only once.\footnote{Here we touch on a fundamental question in \emph{algorithmic mechanism design} posed by \cite{Nisan2001Algorithmic}: How often does one need to solve the underlying optimization problem to obtain $\numagents$ agents' VCG payments? \citet{hershbergerVickreyPricesShortest2001} prove that the answer is one in the domain of finding a shortest path in a network where edges with costs represent self-interested agents.} Building this tree could be parallelized to some degree~\cite{swiechowski2023monte} (although we have not done so in this work), as each new child node's path replanning task based on its unique ordering requires only the information in its parent node and does not need to access a data structure that is shared with nodes other than its own future children.

\begin{proposition}
        EPBS is strategyproof.
\end{proposition}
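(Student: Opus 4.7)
The plan is to invoke \Cref{thm:mir}: it suffices to show that EPBS's allocation rule satisfies MIR with some fixed range $\range$, and that it charges VCG-based payments. The payment part is built into the mechanism by construction (Section~\ref{SEC:MODEL} already specified how EPBS computes $\outcome^\ast_{-i}$ from within the already-explored range), so the core work is verifying MIR. My candidate range is $\range := \{\outcome : \outcome \text{ is the assignment attached to some leaf of the fully expanded EPBS search tree}\}$. EPBS terminates only after every branch has been expanded into a collision-free leaf and then returns the leaf with the maximum reported welfare, so, conditional on $\range$ being a report-independent set, the allocation rule is exactly $\argmax_{\outcome \in \range}\sum_i \reportedwelfare_i(\outcome)$, which matches the MIR definition verbatim.

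The hard part is arguing that $\range$ does not depend on the reports $\reportedtype$. The search tree is determined by (a) the initial single-agent paths, (b) the collisions arising at each node, (c) which collision is chosen to branch on, and (d) the two priority relations inserted as children, followed by single-agent path re-planning. I would argue each ingredient is report-independent, then conclude by induction on tree depth. Since agents cannot misreport $\start_i$ and $\goal_i$, and since per-agent cost $\cost_i|\agentpath_i|$ is monotone in $|\agentpath_i|$ for any $\cost_i>0$, the optimal single-agent path under a given priority ordering is always a shortest path under the constraints imposed by higher-priority agents' committed paths; this does not depend on the reported $\cost_i$ or $\val_i$. Tie-breaking among equal-length paths is, by the assumption stated right before the mechanisms section, report-independent. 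EPBS's rule for selecting which collision to resolve at a node operates on the current ordering and the current paths, which are already report-independent by the inductive hypothesis, so the pair of child nodes produced is identical across all $\reportedtype$. Thus the entire tree of priority orderings and their associated assignments is the same across all $\reportedtype$, and $\range$ is indeed a fixed subset of $\outcomeset$.

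Once MIR is established for this fixed $\range$ and the VCG-based payment rule is in place, \Cref{thm:mir} directly yields strategyproofness. The main obstacle I anticipate is making precise the claim that EPBS's conflict-selection and low-level path-planning (including tie-breaking) are deterministic functions of only the graph and the currently fixed priority ordering --- effectively an assumption about the implementation that must be spelled out, but is mild and standard. Everything else is essentially a structural induction down the search tree combined with the observation that shortest-path length is the sole quantity that matters for single-agent re-planning, so agents have no lever with which to reshape $\range$.
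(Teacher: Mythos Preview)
Your proposal is correct and follows the same approach as the paper: define the range as the set of leaf assignments of the fully expanded EPBS tree, argue that this set is independent of the misreportable information $(\reportedcost_i,\reportedval_i)$ because the tree construction (initial shortest paths, collision detection, conflict selection, and low-level re-planning with report-independent tie-breaking) depends only on $\start_i,\goal_i$ and the fixed agent ordering, and then apply \Cref{thm:mir}. The paper's own proof is simply a terser version of your inductive argument, asserting in one sentence that the same search tree is built for all cost/value reports; your more explicit decomposition of why each step of the tree expansion is report-independent is a welcome elaboration rather than a departure.
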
 
\begin{proof}
    EPBS uses a lexicographic ordering of agents (which we assume to be fixed) and their reports of start- and goal vertices (which we assume to not be misreportable) for exploring assignments. Given a set of agent types that vary only in misreportable information, namely costs $\cost_i$ and values $\val_i$, EPBS constructs the same search tree from each set, resulting in the same set of leaf nodes that contain collision-free assignments $\outcomeset_{\text{PBS}}$. Therefore this set of assignments fulfills the definition of EPBS's \emph{range}. Since EPBS chooses the welfare maximizing assignment over $\outcomeset_{\text{PBS}}$, it fulfills \emph{MIR} and is thus made strategyproof by its payments according to \Cref{thm:mir}.
\end{proof}

\begin{corollary}
    EPBS is individually rational and has non-negative payments.
\end{corollary}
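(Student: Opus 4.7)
The plan is to derive this corollary as a direct consequence of \Cref{thm:ir-negpayments}. That lemma asserts that any mechanism $(\allocrule, \payment)$ whose payment rule is the VCG-based payment rule, combined with the capped cost convention $\cost_i(\agentpath_i) := \min(\val_i, \cost_i|\agentpath_i|)$, is individually rational and charges no negative payments. The lemma makes no use of any property of $\allocrule$ beyond these, so the task reduces to verifying that EPBS meets its hypotheses.

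First I would observe that EPBS, by definition as introduced in this section, charges the VCG-based payments $\payment_i(\reportedtype) = \reportedwelfare_{-i}(\outcome^\ast_{-i}) - \reportedwelfare_{-i}(\outcome^\ast)$, where $\outcome^\ast$ maximizes reported welfare over EPBS's range $\outcomeset_{\text{PBS}}$ (the set of collision-free leaf assignments of the exhaustively expanded PBS search tree) and $\outcome^\ast_{-i}$ is the counterfactual drawn from that same range with agent $i$'s value and cost zeroed out. Second, I would note that the surrounding text already commits the mechanisms in this paper to the capped cost convention for the purpose of guaranteeing individual rationality, so this hypothesis is also satisfied.

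With both hypotheses of \Cref{thm:ir-negpayments} in place, I would conclude by simply invoking the lemma: part (i) gives individual rationality of EPBS, and part (ii) gives non-negativity of its payments. The proof is therefore essentially a one-line appeal to the lemma, preceded by this verification.

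The only place where care is needed, and hence the closest thing to an obstacle, is ensuring that EPBS's particular treatment of agents whose assigned path would yield negative welfare is compatible with the lemma's setup. Specifically, the footnote in the text notes that EPBS treats empty, zero-welfare paths as unavailable space for lower-priority agents. I would point out that this is precisely what preserves the property, highlighted in the discussion preceding \Cref{thm:mir}, that $\outcome^\ast = \outcome^\ast_{-i}$ whenever $\welfare_i = 0$, and therefore $\payment_i = 0$ in that case. This confirms that the capped cost convention used in the lemma applies unchanged to EPBS, so the invocation of \Cref{thm:ir-negpayments} goes through without modification.
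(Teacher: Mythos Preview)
Your proposal is correct and takes essentially the same approach as the paper, which simply states that the result follows directly from \Cref{thm:ir-negpayments}. Your additional verification that EPBS satisfies the lemma's hypotheses and your discussion of the footnote's compatibility are sound elaborations, but the paper itself offers no more than the one-line appeal.
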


\noindent The proof follows directly from \Cref{thm:ir-negpayments}.

\subsection{Monte-Carlo Prioritized Planning (MCPP)}
MCPP demonstrates how MIR allows for creating strategyproof MAPF mechanisms that scale to the theoretical optimum, at the cost of suboptimality.
Our solution utilizes \emph{prioritized planning (PP)}~\cite{erdmann1987multiple}. In contrast to (E)PBS, PP takes as an input a \emph{total priority ordering} $\ordering$ and expands it into a set of conflict-free paths by sequentially assigning each agent (starting with the highest priority one) the shortest possible path that avoids all agents of higher priority. In this way, calculating one assignment takes only $\numagents$ low-level path-finding searches. 

Our MCPP mechanism (see \Cref{mech:mcpp}) samples a subset $\orderingset$ of all possible total priority orderings in a way that is independent of agent reports, then runs prioritized planning on each ordering in $\orderingset$ yielding a range of path assignments $\outcomeset_\orderingset$, selects the welfare-maximizing assignment $\outcome^\ast \in \outcomeset_\orderingset$ and charges VCG-based payments. Concretely, MCPP randomly samples $\numsamples$ total priority orderings from the space of $\numagents!$ possible ones. We can directly trade off computational efficiency and optimality by changing the number of samples $\numsamples$. Increasing $\numsamples$ increases the probability of $\orderingset$ containing orderings that generate near-optimal assignments, and thus the expected welfare of MCPP. We avoid the usually costly computation of counterfactual outcomes for the VCG payments, as the counterfactual $\outcome_{-i}^\ast$ is selected from the range $\outcomeset_\orderingset$, which is already fully explored during the finding of $\outcome^\ast$. The computational complexity of MCPP therefore scales linearly with the number of agents and the number of samples $\numsamples$ that are selected, a drastic improvement over the exponential scaling of PCBS, EPBS and auction-based strategyproof MAPF mechanisms. 
As the sampled orderings are independent, their $\numsamples$ outcomes could be calculated in parallel, resulting in a real runtime for MCPP of just $\numagents$ calls of $A^\ast$.

\begin{proposition}
        MCPP is strategyproof.
\end{proposition}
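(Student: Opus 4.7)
The plan is to mirror the EPBS strategyproofness proof: exhibit a fixed range for MCPP's allocation rule that is independent of the misreportable portion of agents' types, conclude MIR, and then invoke \Cref{thm:mir} together with the VCG-based payment rule.

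First, I would fix the reported start and goal vertices (assumed non-misreportable) and the lexicographic/graph tiebreaking and analyze what happens when only the reported costs $\reportedcost_i$ and values $\reportedval_i$ vary. The sampling step of MCPP draws the set $\orderingset$ of $\numsamples$ total priority orderings in a way that is explicitly independent of agent reports, so $\orderingset$ is the same across all reports in this family. This is the key structural fact that makes the rest of the argument go through.

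Next, I would argue that for each fixed ordering $\ordering \in \orderingset$, prioritized planning produces an assignment that depends only on the graph, the agents' start/goal vertices, and the ordering $\ordering$ itself. The reason is that PP processes agents in the order $\ordering$ and assigns each agent a shortest path (with a deterministic, report-independent tiebreaking rule) avoiding the already-planned higher-priority paths; neither the shortest-path computation nor the constraint set depends on any $\reportedcost_i$ or $\reportedval_i$. Hence the full collection $\outcomeset_\orderingset = \{\text{PP}(\ordering) : \ordering \in \orderingset\}$ is identical for any two report profiles that agree on start/goal vertices, which establishes $\outcomeset_\orderingset$ as a fixed subset of $\outcomeset$ in the sense of the MIR definition.

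Then, since MCPP by construction outputs $\argmax_{\outcome \in \outcomeset_\orderingset} \sum_i \reportedwelfare_i(\outcome)$, the allocation rule is maximal-in-range with range $\range = \outcomeset_\orderingset$. Combined with the VCG-based payment rule that MCPP uses, \Cref{thm:mir} immediately yields strategyproofness. The one subtlety I expect to be the main obstacle is handling tiebreaking carefully: if PP ever chose among equal-cost single-agent paths in a way that depended on $\reportedcost_i$ or $\reportedval_i$, the range would shift with reports and MIR would fail. I would therefore make explicit the assumption (already stated in the EPBS discussion) that ties are broken using a fixed, report-independent rule, so that PP is a deterministic function of $\ordering$ and the non-misreportable data, which locks the range in place and completes the proof.
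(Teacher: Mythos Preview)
Your proposal is correct and follows essentially the same route as the paper: establish that the sampled orderings and hence the set $\outcomeset_\orderingset$ of PP assignments are independent of the misreportable part of the reports, conclude MIR with range $\outcomeset_\orderingset$, and invoke \Cref{thm:mir}. You are simply more explicit than the paper about why each PP output depends only on the ordering and the non-misreportable data (including the report-independent tiebreaking assumption), which the paper compresses into the phrase ``each ordering in $\orderingset$ corresponds to a unique path assignment.''
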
 
\begin{proof}
    MCPP generates a subset $\orderingset$ of priority orderings without considering agent reports. Since each ordering $\ordering$ in $\orderingset$ corresponds to a unique path assignment $\outcome_\ordering$, the set $\outcomeset_\orderingset = \{\outcome_\ordering | \ordering \in \orderingset\}$ fulfills the definition of MCPP's \emph{range}. As MCPP chooses the welfare-maximising assignment over $\outcomeset_\ordering$, it fulfills \emph{MIR} and is thus made strategyproof by its payments according to \Cref{thm:mir}.
\end{proof}

\begin{corollary}
    MCPP is individually rational and has non-negative payments.
\end{corollary}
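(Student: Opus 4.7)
The plan is to observe that this corollary is an immediate instance of \Cref{thm:ir-negpayments}, and that the only thing to verify is that MCPP genuinely satisfies the hypotheses of that lemma. Since \Cref{thm:ir-negpayments} applies to \emph{any} mechanism $(\allocrule,\payment)$ whose payment rule is the VCG-based payment rule (and whose cost function is capped as $\cost_i(\agentpath_i) = \min(\val_i, \cost_i|\agentpath_i|)$), and since MCPP is defined precisely to use the VCG-based payments on top of a priority-sampling allocation rule with exactly this capped cost convention, the conclusion transfers directly.

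Accordingly, I would write the proof in essentially one sentence: MCPP's allocation rule $\allocrule$ assigns paths drawn from the range $\outcomeset_\orderingset$ produced by prioritized planning on the sampled orderings, and its payment rule $\payment$ is the VCG-based payment rule defined in the problem setup. Both ingredients therefore satisfy the hypotheses of \Cref{thm:ir-negpayments}, which yields individual rationality (part (i)) and non-negativity of payments (part (ii)) without any further work.

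There is no real obstacle here: the lemma was stated in a form general enough to cover any allocation rule paired with the VCG-based payment rule, and the capped-cost convention that made the lemma's proof go through is the same convention MCPP adopts (including the treatment of 0-welfare empty paths as blocked vertices, noted in the footnote of the problem setup). The only thing to be slightly careful about is to remind the reader that the argument given in the problem setup -- that if agent $i$'s assigned path has cost exceeding $\val_i$ then $\outcome^\ast = \outcome_{-i}^\ast$ and hence $\payment_i = 0$ and $\utility_i = 0$ -- applies verbatim within MCPP's range $\outcomeset_\orderingset$, so no edge case is introduced by the Monte-Carlo sampling step.

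Concretely, the proof I would write is: \emph{The proof follows directly from \Cref{thm:ir-negpayments}, since MCPP pairs a path allocation rule with the VCG-based payment rule under the capped cost convention.}
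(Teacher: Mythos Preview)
Your proposal is correct and takes essentially the same approach as the paper: the paper's entire proof is the single sentence ``The proof follows directly from \Cref{thm:ir-negpayments}.'' Your additional remarks verifying that MCPP indeed uses the VCG-based payment rule and the capped-cost convention are accurate and merely make explicit what the paper leaves implicit.
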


\noindent The proof follows directly from \Cref{thm:ir-negpayments}.

\begin{remark}
    If there is public knowledge about agents, we can also incorporate it. For example, if it is known that an agent generally has high costs across all problem instances, we can improve the expected welfare of the mechanism by restricting $\orderingset$ to total orders that assign that agent a high priority.
\end{remark}

\begin{algorithm}
\caption{\textsc{MCPP}}\label{mech:mcpp}
\DontPrintSemicolon
\KwIn{Agent reports $(\agenttype_1,\ldots,\agenttype_\numagents)$, No. of samples $m$}
\KwOut{Conflict-free paths $(\agentpath_1, \ldots, \agentpath_\numagents)$ and payments $(\payment_1, \ldots, \payment_\numagents)$}
\BlankLine

\textbf{Allocation rule: Prioritized Planning}\;
\nl Draw a set $\orderingset$ of $\numsamples$ total priority orderings $\ordering$ independently from agent reports.\;
\nl \For{$\ordering \in \orderingset$}{
$\outcome_\ordering \leftarrow PP(\ordering)$}
Result: set of assignments $\{\outcome_\ordering | \ordering \in \orderingset\} = \outcomeset_\orderingset$, each assignment contains conflict-free paths $(\agentpath_1, \ldots, \agentpath_\numagents)$\;
\nl $\welfare(\outcome) \leftarrow \sum_i \bigl(\val_i-\cost(\agentpath_i^{\outcome})\bigr)$ for all $\outcome \in \outcomeset_\orderingset$\;
\nl $\outcome^\ast \leftarrow \argmax_{\outcome\in\outcomeset_\orderingset}\welfare(\outcome)$, the welfare maximising assignment over $\outcomeset_\orderingset$\;
\BlankLine
\textbf{Payment rule: VCG-based payments}\;

\nl \For{agents $i=1,\ldots,\numagents$}{
    $\welfare_{-i}(\outcome) \leftarrow \sum_{j\neq i} \bigl(\val_j - \cost(\agentpath_j^\outcome)\bigr)$ for all $\outcome \in \outcomeset_\orderingset$ \;
    $\outcome^*_{-i} \leftarrow \argmax_{\outcome \in \outcomeset_\orderingset} \welfare_{-i}(\outcome)$\;
    $\payment_i \leftarrow \welfare_{-i}(\outcome^*_{-i})-\welfare_{-i}(\outcome^*)$\;
}
\nl Assign each agent $i$ the path $\agentpath_i \in d^\ast$\;
\nl Charge each agent $i$ the payment $\payment_i$\;
\end{algorithm}
\section{Experiments}
\begin{figure*}[ht]
    \centering
    \includegraphics[width=0.85\textwidth]{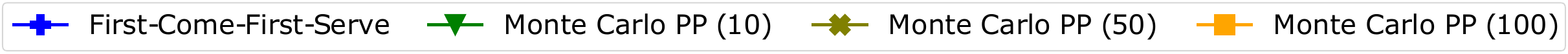}
        \begin{subfigure}{0.3\textwidth}
        \centering
        \offinterlineskip
        \includegraphics[width=1\textwidth]{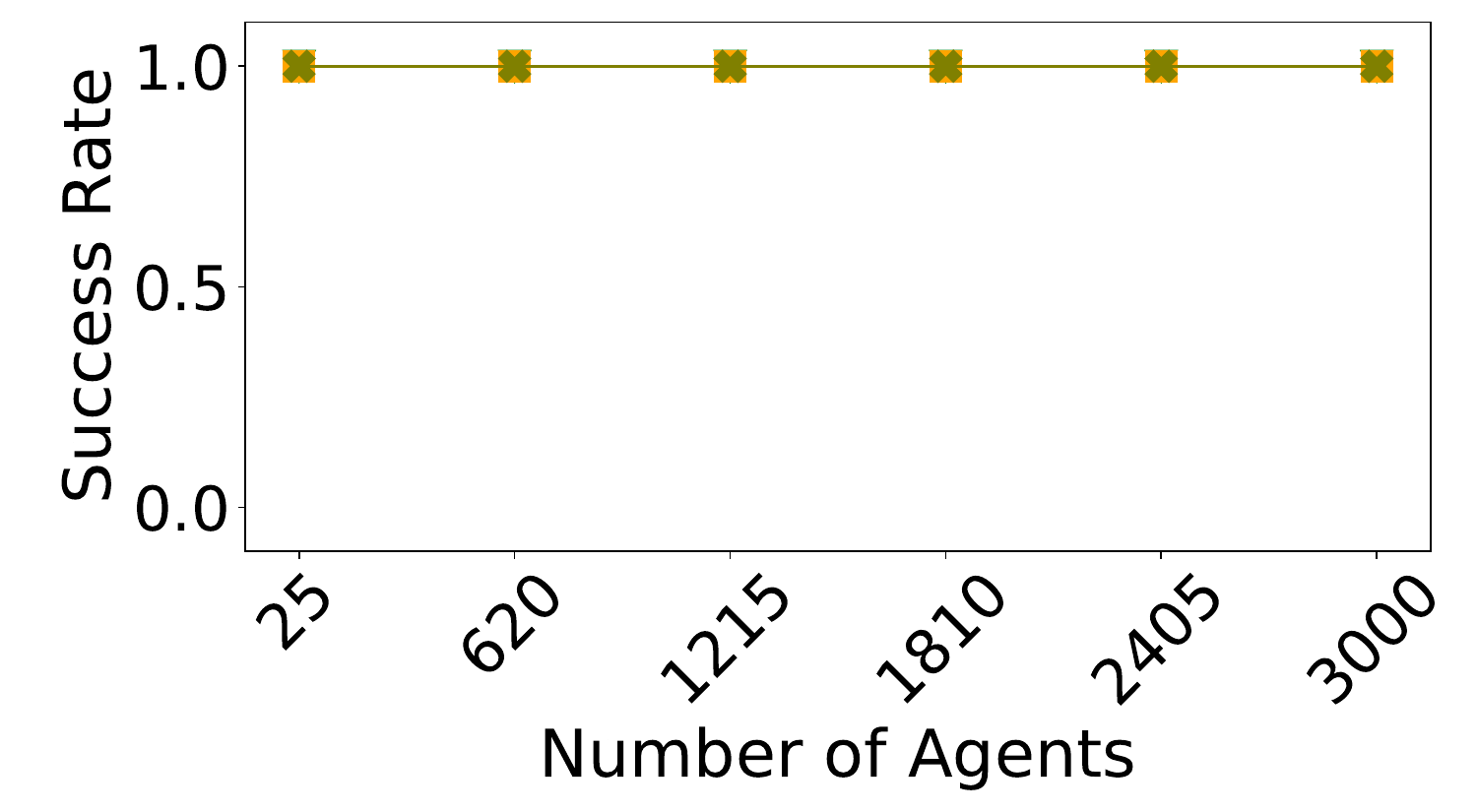}
        \label{fig:mcpp-compare-success}
    \end{subfigure}
    \hfill
    \begin{subfigure}{0.3\textwidth}
        \centering
        \offinterlineskip
        \includegraphics[width=1\textwidth]{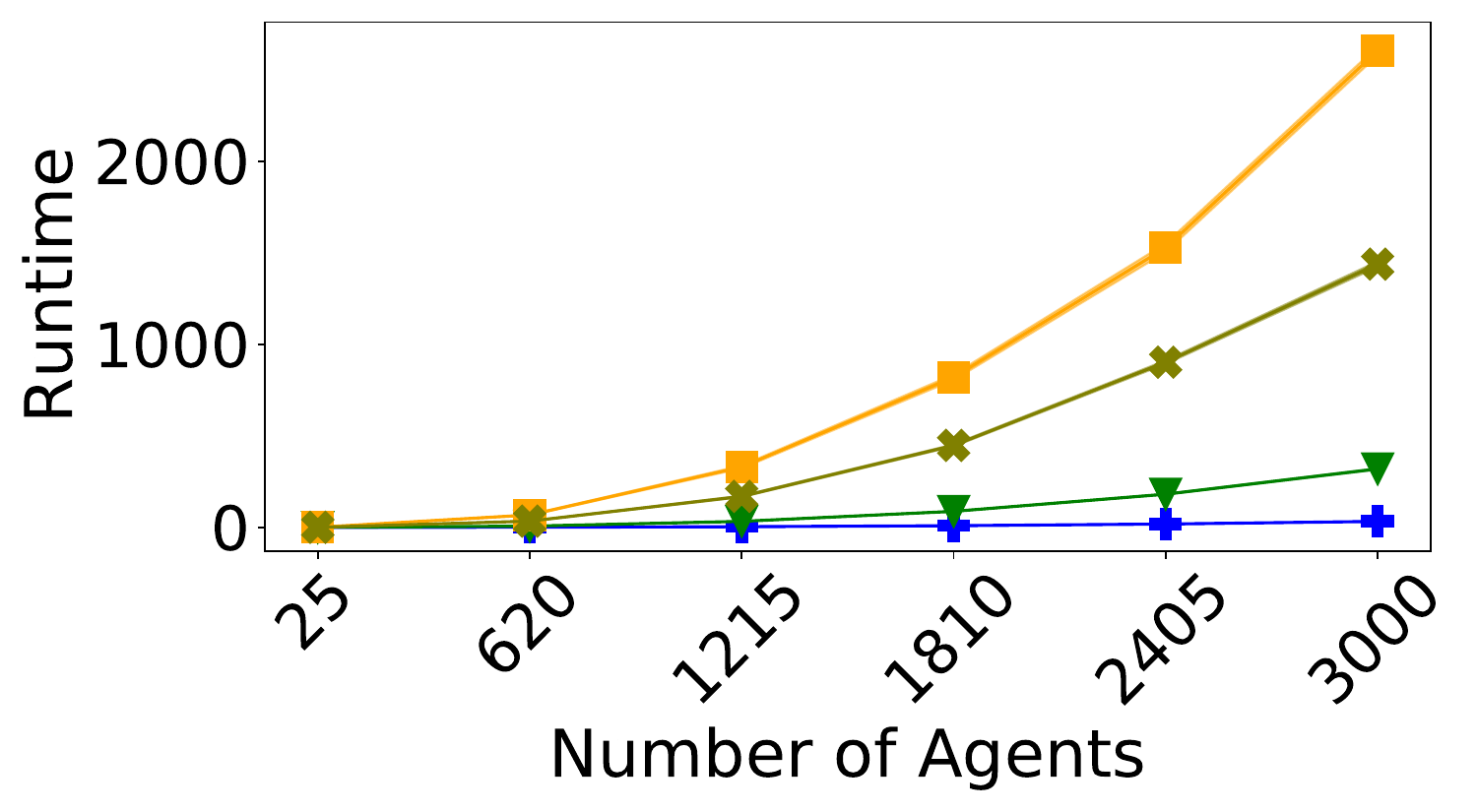}
        \label{fig:mcpp-compare-runtime}
    \end{subfigure}
    \hfill
    \begin{subfigure}{0.3\textwidth}
        \centering
        \offinterlineskip
        \includegraphics[width=1\textwidth]{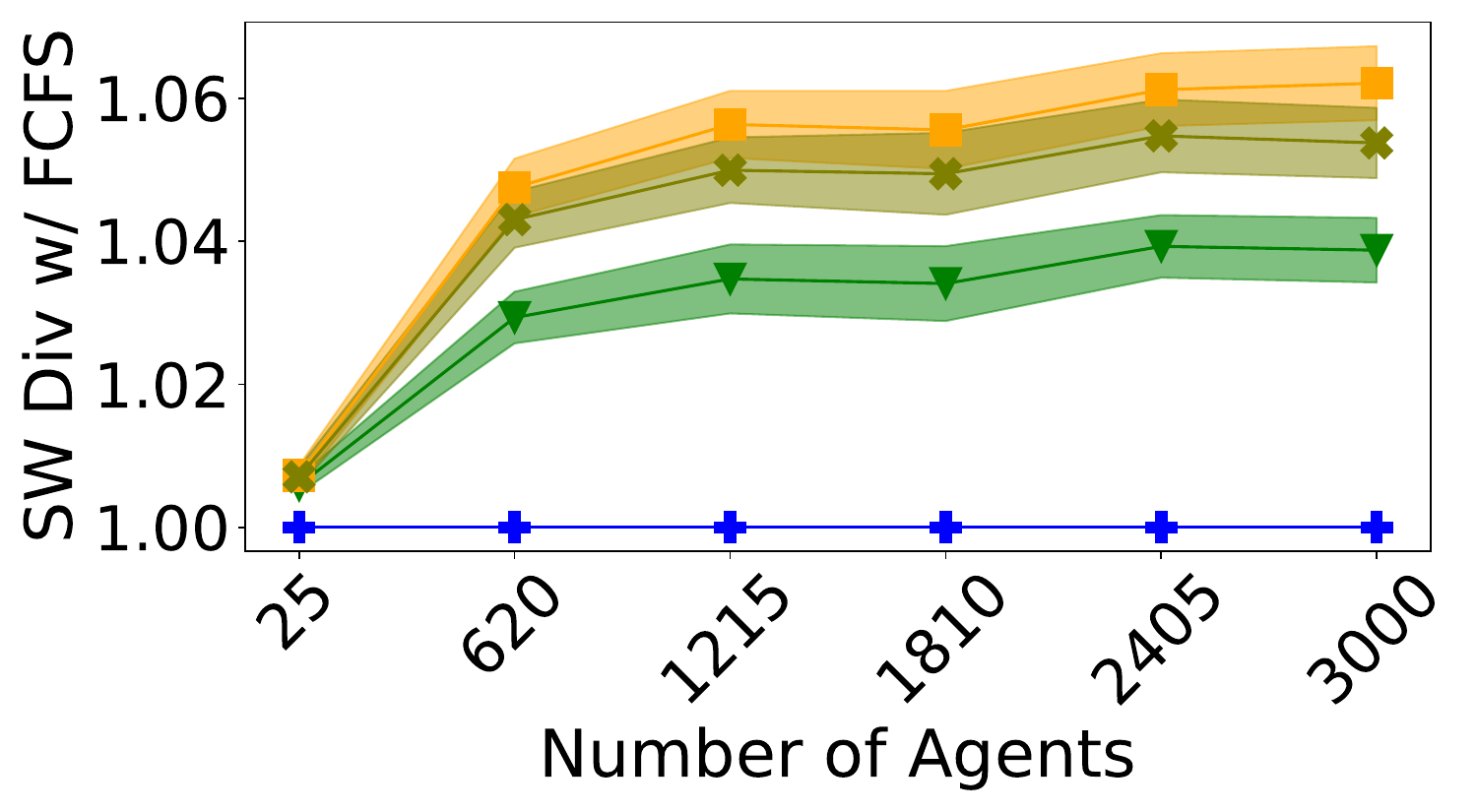}
        \label{fig:mcpp-compare-welfare-subopt}
    \end{subfigure}
    \caption{Success rate, runtime, and ratio-to-baseline of social welfare (SW) of Monte Carlo PP with $m \in \{10, 50, 100\}$ samples.}
    \label{fig:compare-mcpp}
\end{figure*}

Our experiments verify that our mechanisms' behavior follows our claims on optimality, scalability, and payments.
Our benchmarks are strategyproof mechanisms whose allocation rule solves classical MAPF, iBundle \cite{amirMultiAgentPathfindingCombinatorial2015} being the most advanced competition. We could not obtain its reference implementation to be able to accurately compare runtimes. In the original benchmarks \cite{amir2014empirical}, iBundle shows similar scalability as CBS. As our PCBS, if parallelized, has identical time complexity as CBS, the comparison can be extrapolated.
We also do not compare scalability against SOTA approximate MAPF algorithms, as they are not strategyproof and feature structures unsuited for agent cost-weighted social welfare objectives.

\subsection{Experiment Setup}\label{SEC:EXP}

We evaluate payment-CBS (PCBS), exhaustive-PBS (EPBS) and Monte-Carlo prioritized planning (MCPP) on four 2D maps selected from the commonly used MAPF benchmarks by~\citet{sternMultiAgentPathfindingDefinitions2019}. The appendix contains evaluations on derived 3D maps. 
We include the so-called First-Come-First-Serve (FCFS) allocation rule, which runs prioritized planning on a single randomly sampled priority ordering as a lower bound for achieved welfare and runtime, and does not include payments. While suboptimal, it is the best pre-existing mechanism that scales to larger instances and is used in practice, e.g., in the EU's regulation on assigning airspace to drone traffic~\cite{theeuropeancommissionCommissionImplementingRegulation2021}.
We independently sample each agent's value from $\mathcal{U}([0,1])$ and cost from $\mathcal{U}([0, \val_i / \text{dist}(\start_i, \goal_i)])$, where $\text{dist}(\start_i, \goal_i)$ is the length of the shortest path from $\start_i$ to $\goal_i$ in isolation. This ensures that near-optimal paths are likely to result in positive welfare. MCPP uses a sample size of $\numsamples=100$, except in Figure \ref{fig:compare-mcpp}.
Each run uses a set of $100$ instances. We set a runtime limit for all mechanisms at $3600$ seconds for the \emph{random-32-32-20} and \emph{dens312d} maps and $5400$ seconds for the \emph{dens520d} and \emph{Paris-1-256} maps. Success rate refers to the fraction of runs that finish within the time limit. The runtime for individual runs that do not finish within the limit is set to the limit.
We did not use parallelized implementations of our mechanisms. Our code was implemented in C++ and ran on a 2x12-core Intel Xeon E5-2650v4 machine with 128 GB of RAM. 

\subsection{Results}

\begin{figure*}[!t]
    \centering
    \offinterlineskip
    \includegraphics[width=0.85\textwidth]{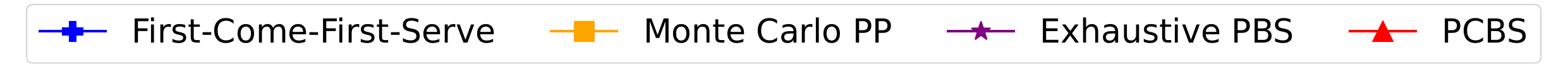}

    \label{fig:legend}
    \begin{subfigure}{0.246\textwidth}
        \centering
        \offinterlineskip
        \includegraphics[width=1\textwidth, height=0.6\textwidth]{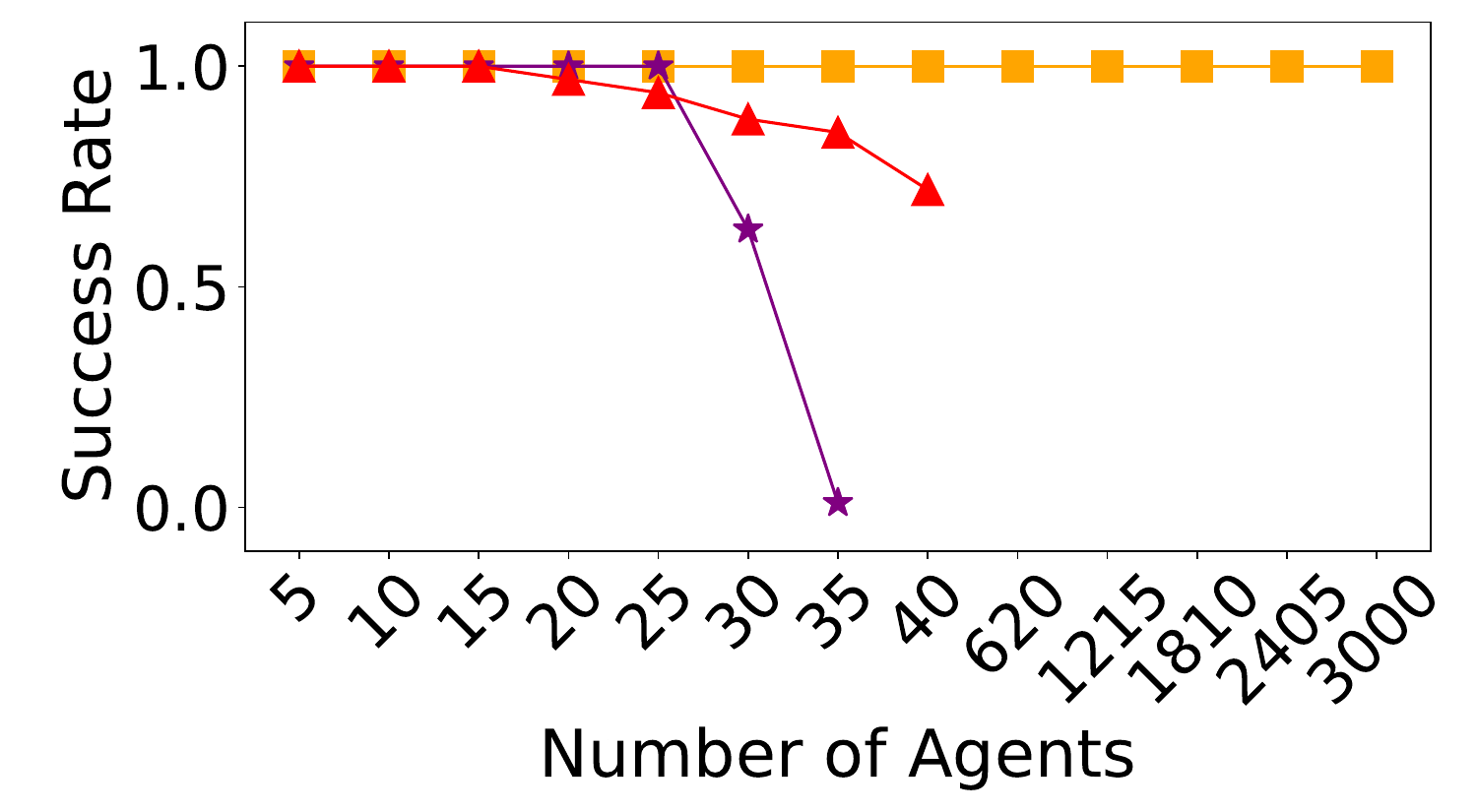}
        \label{fig:random-32-32-20-layer=1-success}
    \end{subfigure}
    \hfill
    \begin{subfigure}{0.246\textwidth}
        \centering
        \offinterlineskip
        \includegraphics[width=1\textwidth, height=0.6\textwidth]{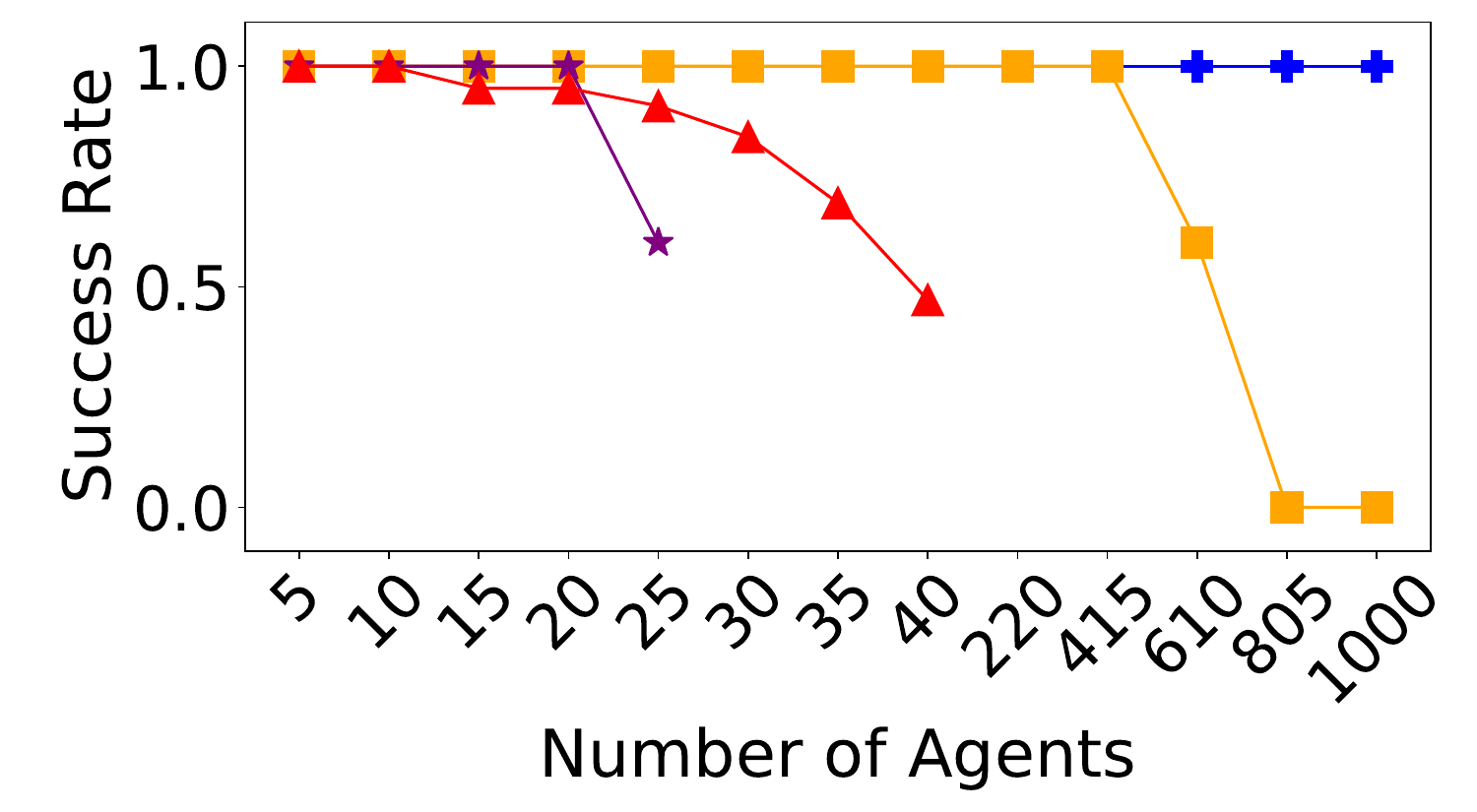}
        \label{fig:den312d-layer=1-success}
    \end{subfigure}
    \hfill
    \begin{subfigure}{0.246\textwidth}
        \centering
        \offinterlineskip
        \includegraphics[width=1\textwidth, height=0.6\textwidth]{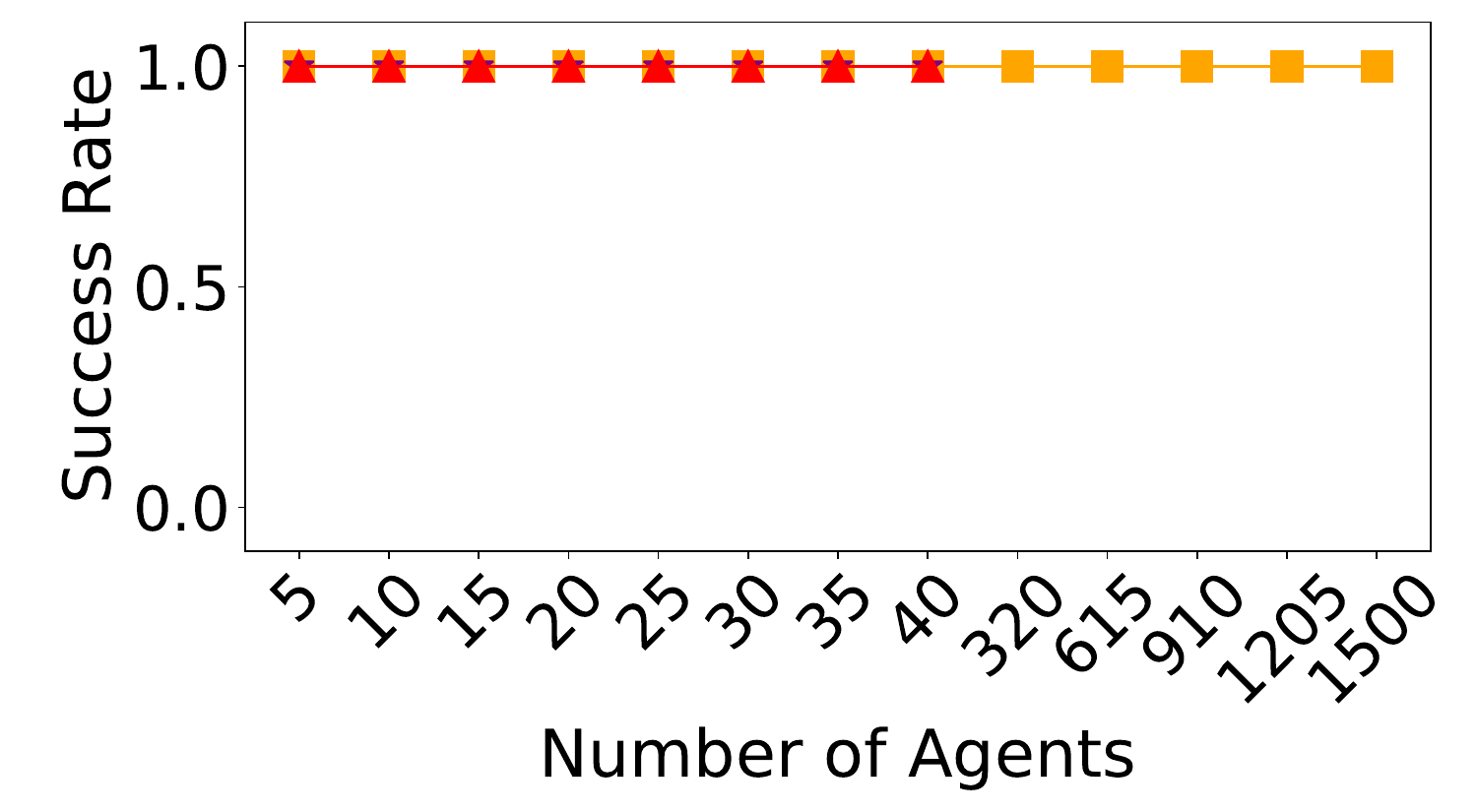}
        \label{fig:Paris-1-256-layer=1-success}
    \end{subfigure}
    \hfill
    \begin{subfigure}{0.246\textwidth}
        \centering
        \offinterlineskip
        \includegraphics[width=1\textwidth, height=0.6\textwidth]{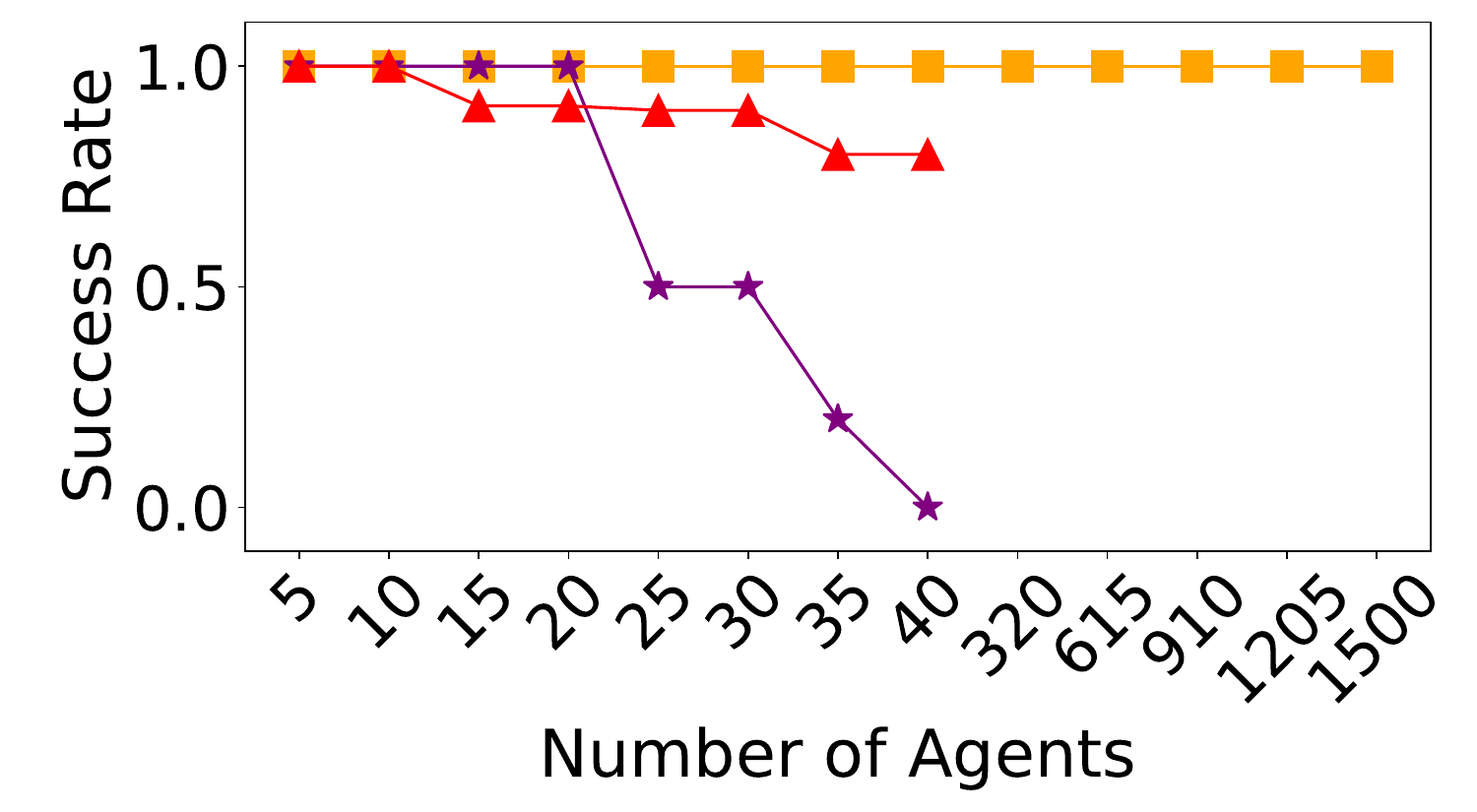}
        \label{fig:den520d-layer=1-success}
    \end{subfigure}
    \\
    \begin{subfigure}{0.246\textwidth}
        \centering
        \offinterlineskip
        \includegraphics[width=1\textwidth, height=0.6\textwidth]{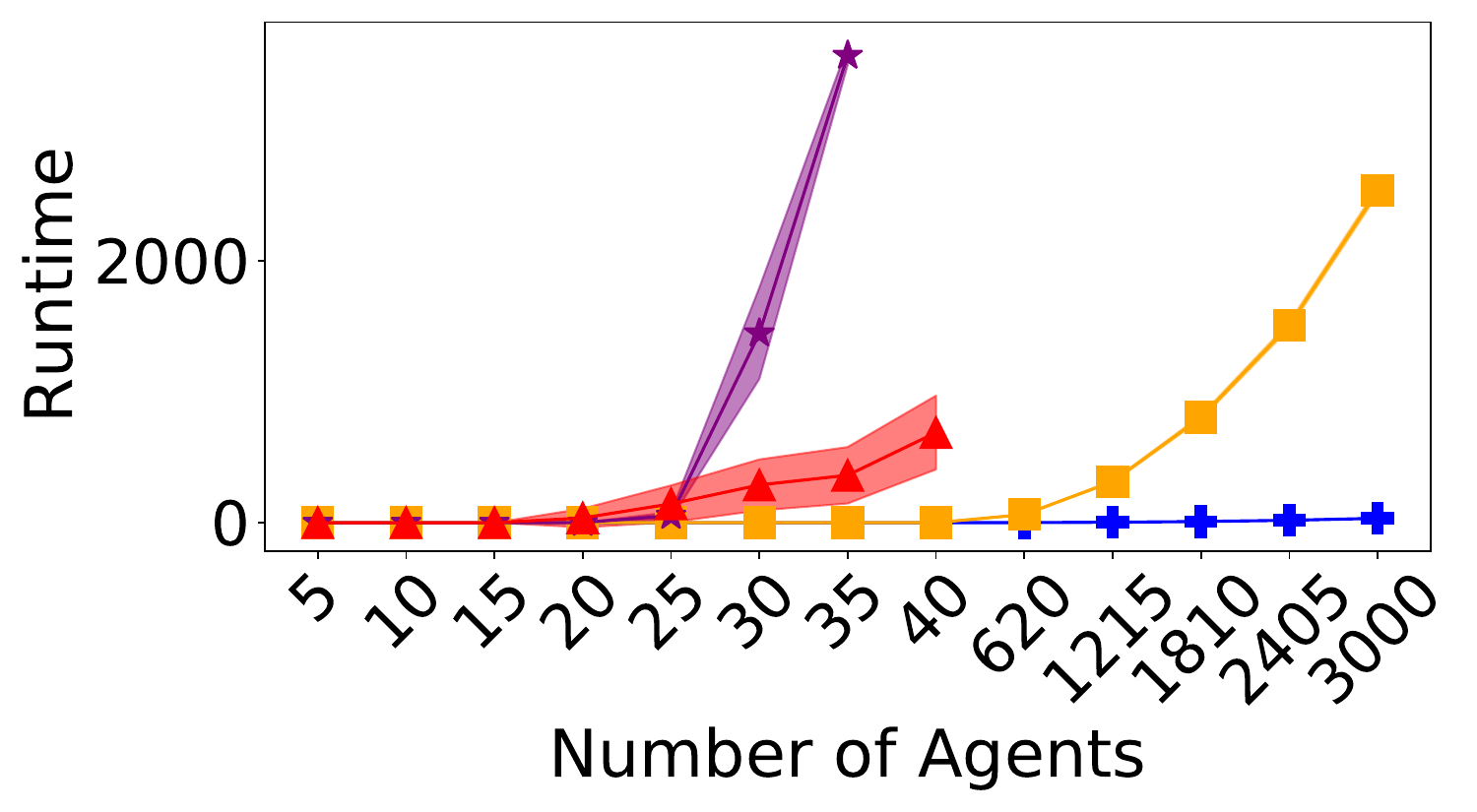}
        \label{fig:random-32-32-20-layer=1-runtime}
    \end{subfigure}
    \hfill
    \begin{subfigure}{0.246\textwidth}
        \centering
        \offinterlineskip
        \includegraphics[width=1\textwidth, height=0.6\textwidth]{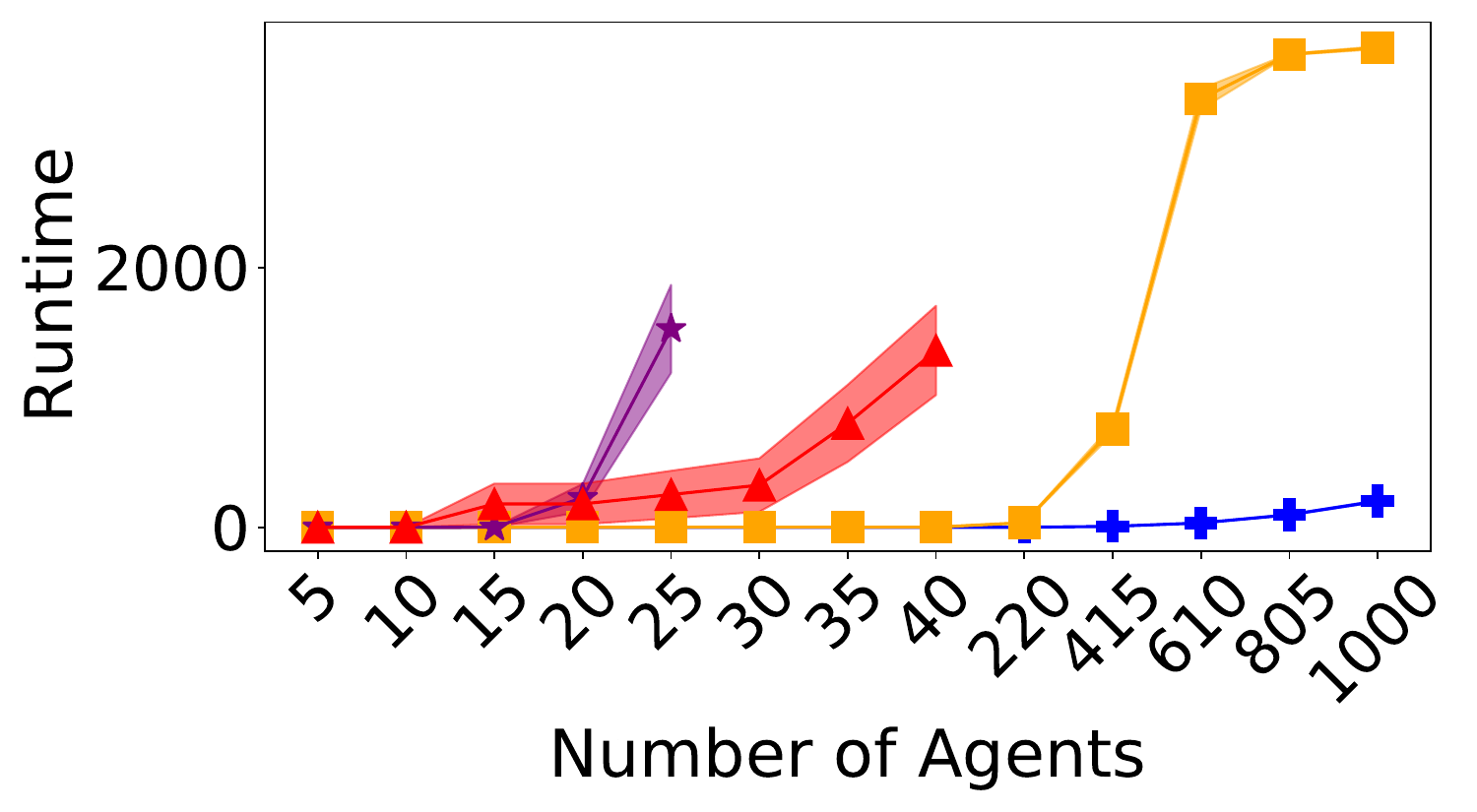}
        \label{fig:den312d-layer=1-runtime}
    \end{subfigure}
    \hfill
    \begin{subfigure}{0.246\textwidth}
        \centering
        \offinterlineskip
        \includegraphics[width=1\textwidth, height=0.6\textwidth]{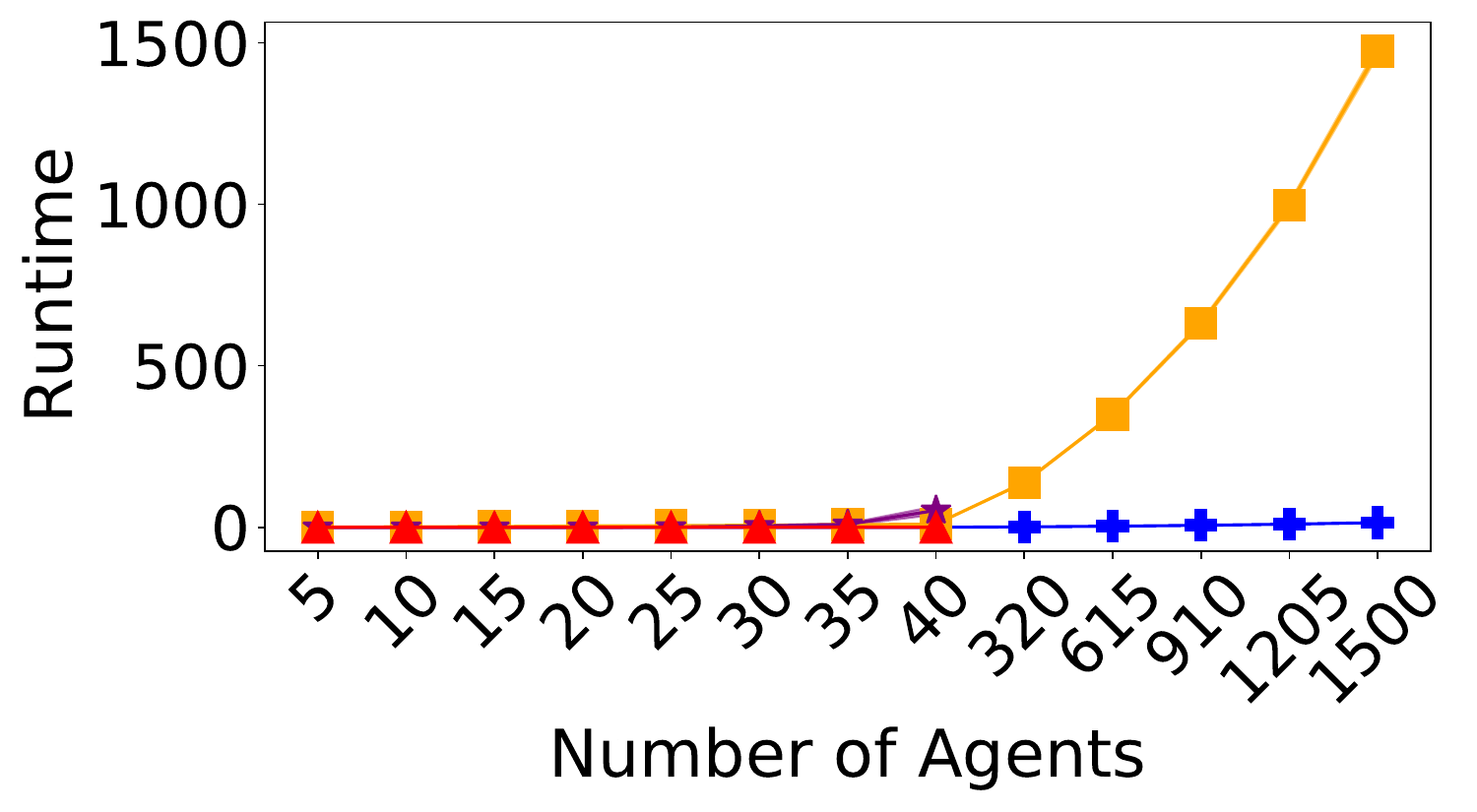}
        \label{fig:Paris-1-256-layer=1-runtime}
    \end{subfigure}
    \hfill
    \begin{subfigure}{0.246\textwidth}
        \centering
        \offinterlineskip
        \includegraphics[width=1\textwidth, height=0.6\textwidth]{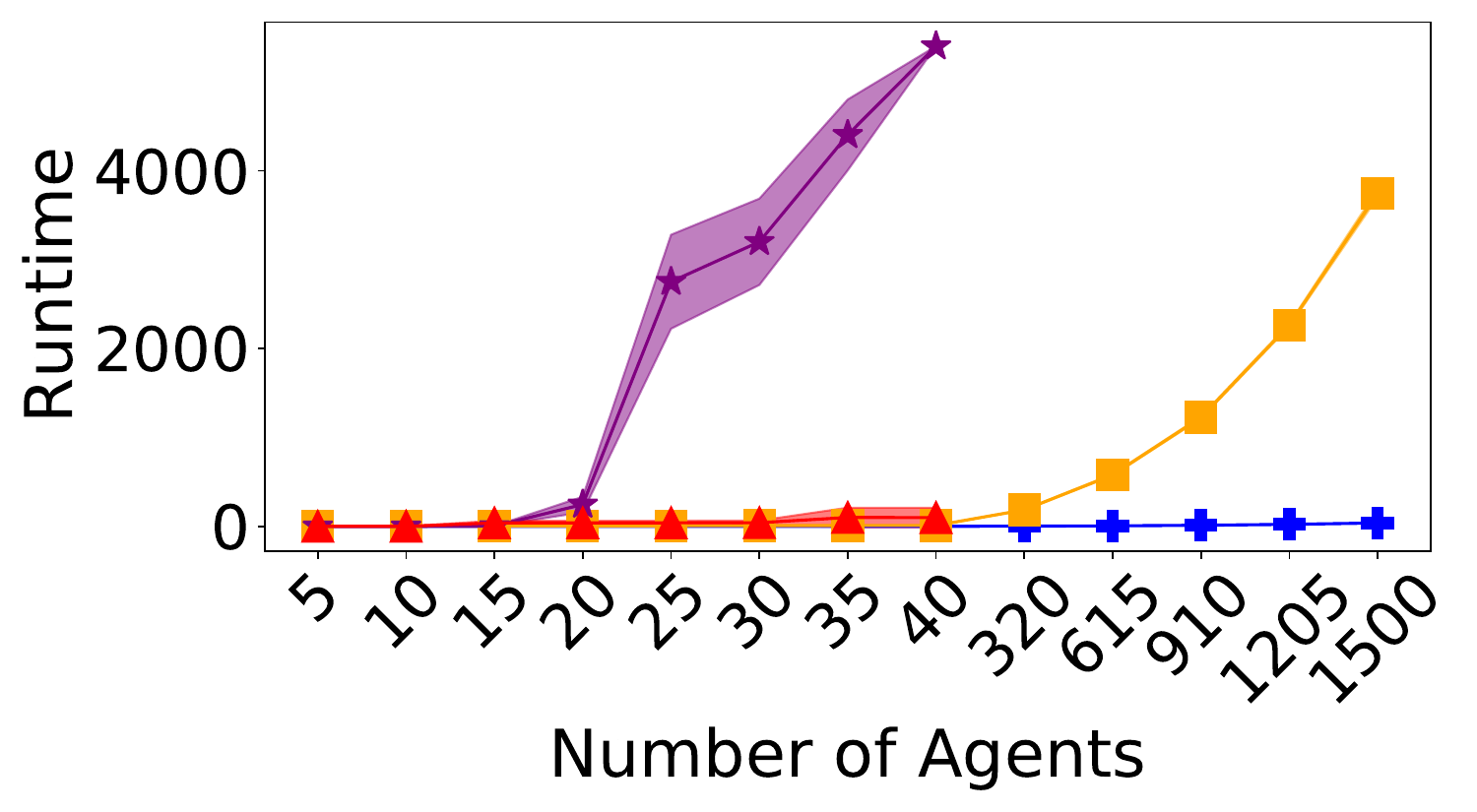}
        \label{fig:den520d-layer=1-runtime}
    \end{subfigure}
    \\
    \begin{subfigure}{0.246\textwidth}
        \centering
        \offinterlineskip
        \includegraphics[width=1\textwidth, height=0.6\textwidth]{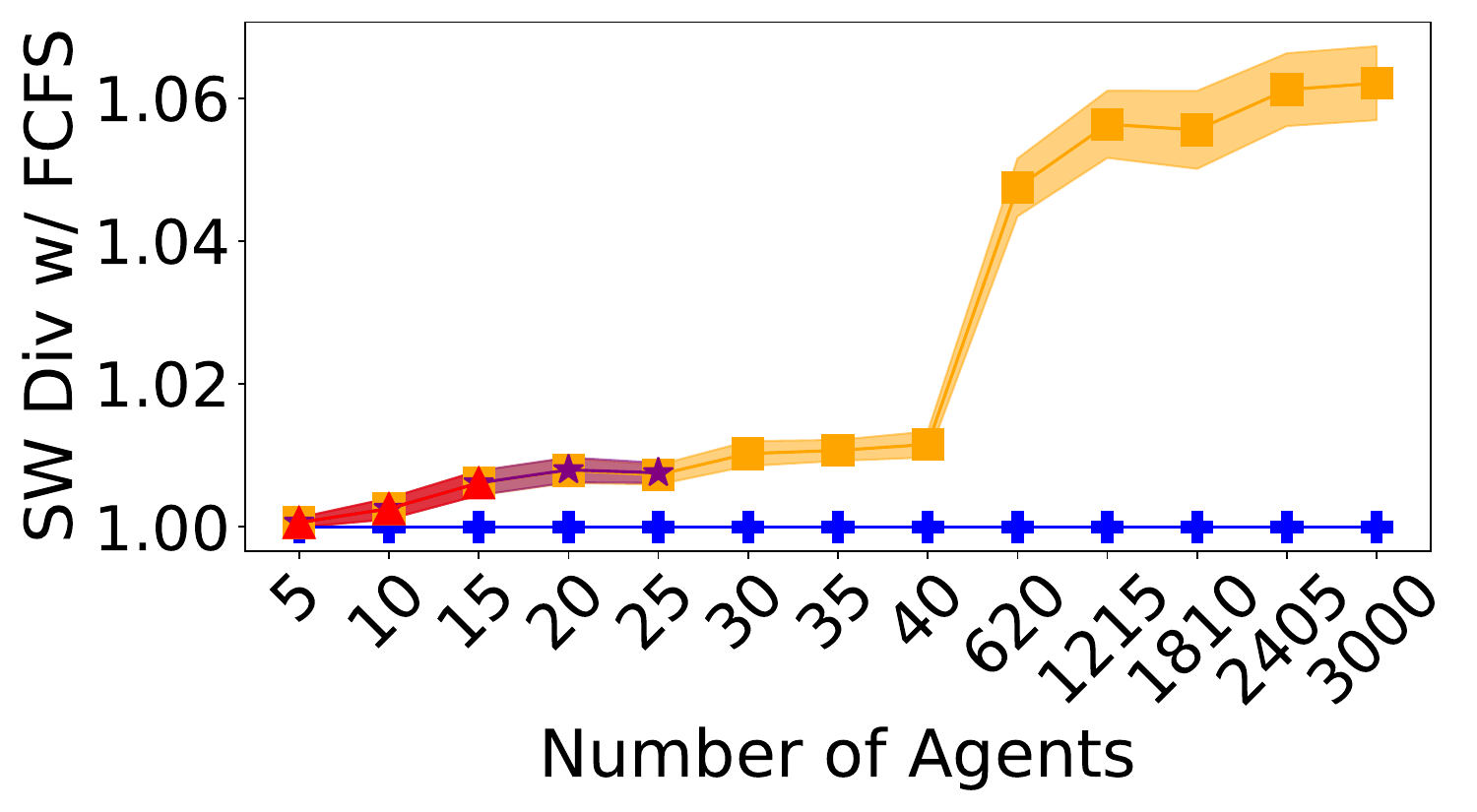}
        \label{fig:random-32-32-20-layer=1-welfare-subopt}
    \end{subfigure}
    \hfill
    \begin{subfigure}{0.246\textwidth}
        \centering
        \offinterlineskip
        \includegraphics[width=1\textwidth, height=0.6\textwidth]{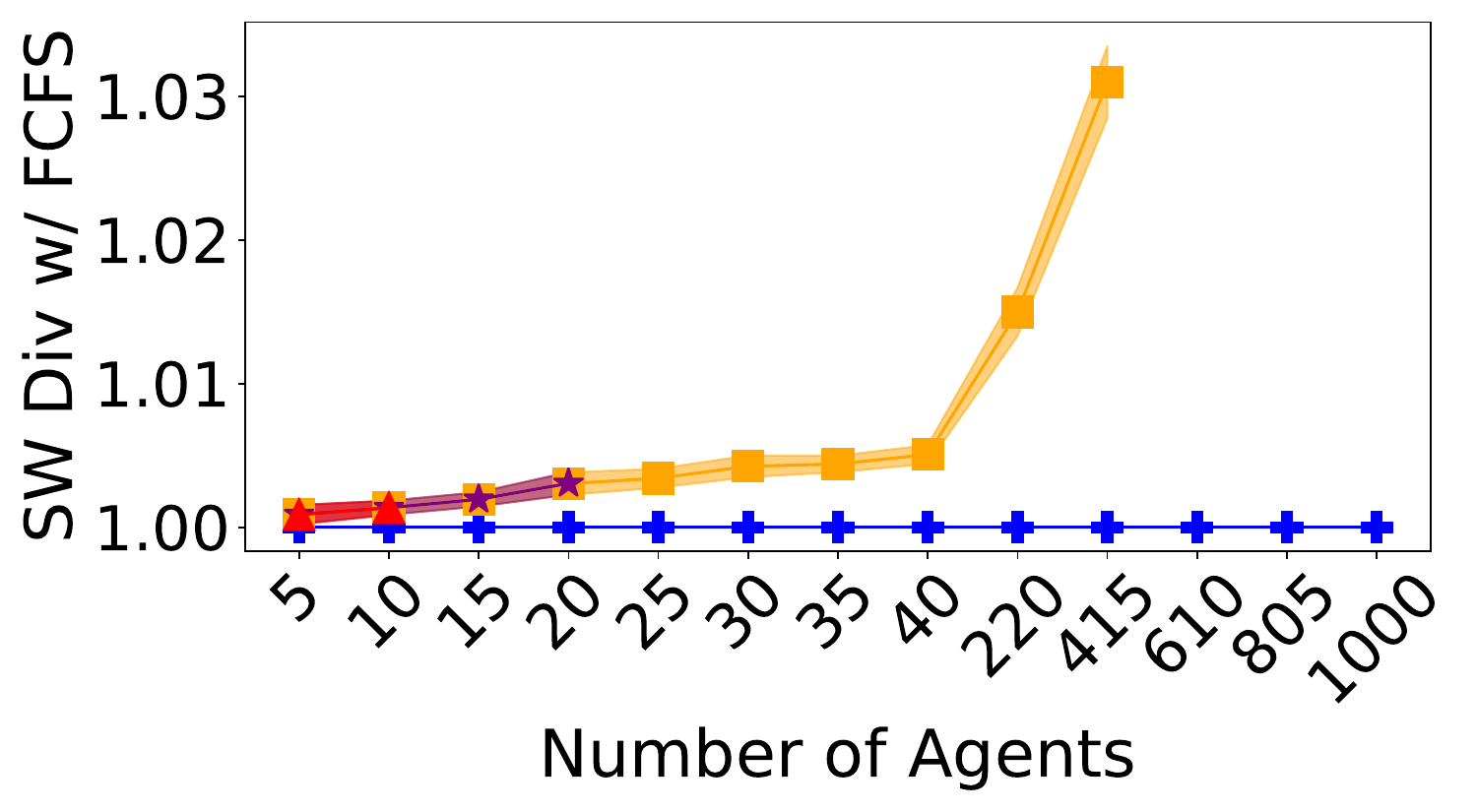}
        \label{fig:den312d-layer=1-welfare-subopt}
    \end{subfigure}
    \hfill
    \begin{subfigure}{0.246\textwidth}
        \centering
        \offinterlineskip
        \includegraphics[width=1\textwidth, height=0.6\textwidth]{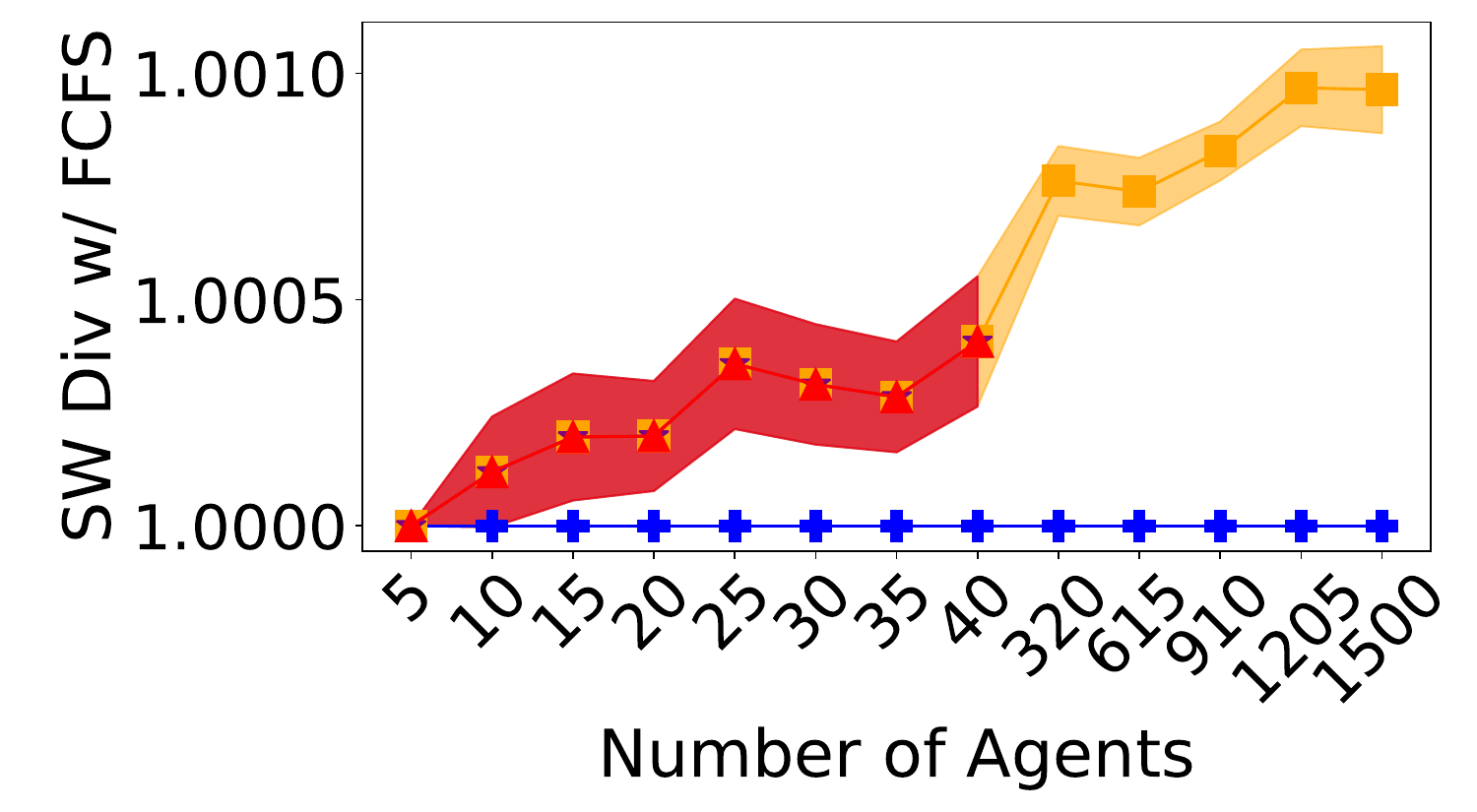}
        \label{fig:Paris-1-256-layer=1-welfare-subopt}
    \end{subfigure}
    \hfill
    \begin{subfigure}{0.246\textwidth}
        \centering
        \offinterlineskip
        \includegraphics[width=1\textwidth, height=0.6\textwidth]{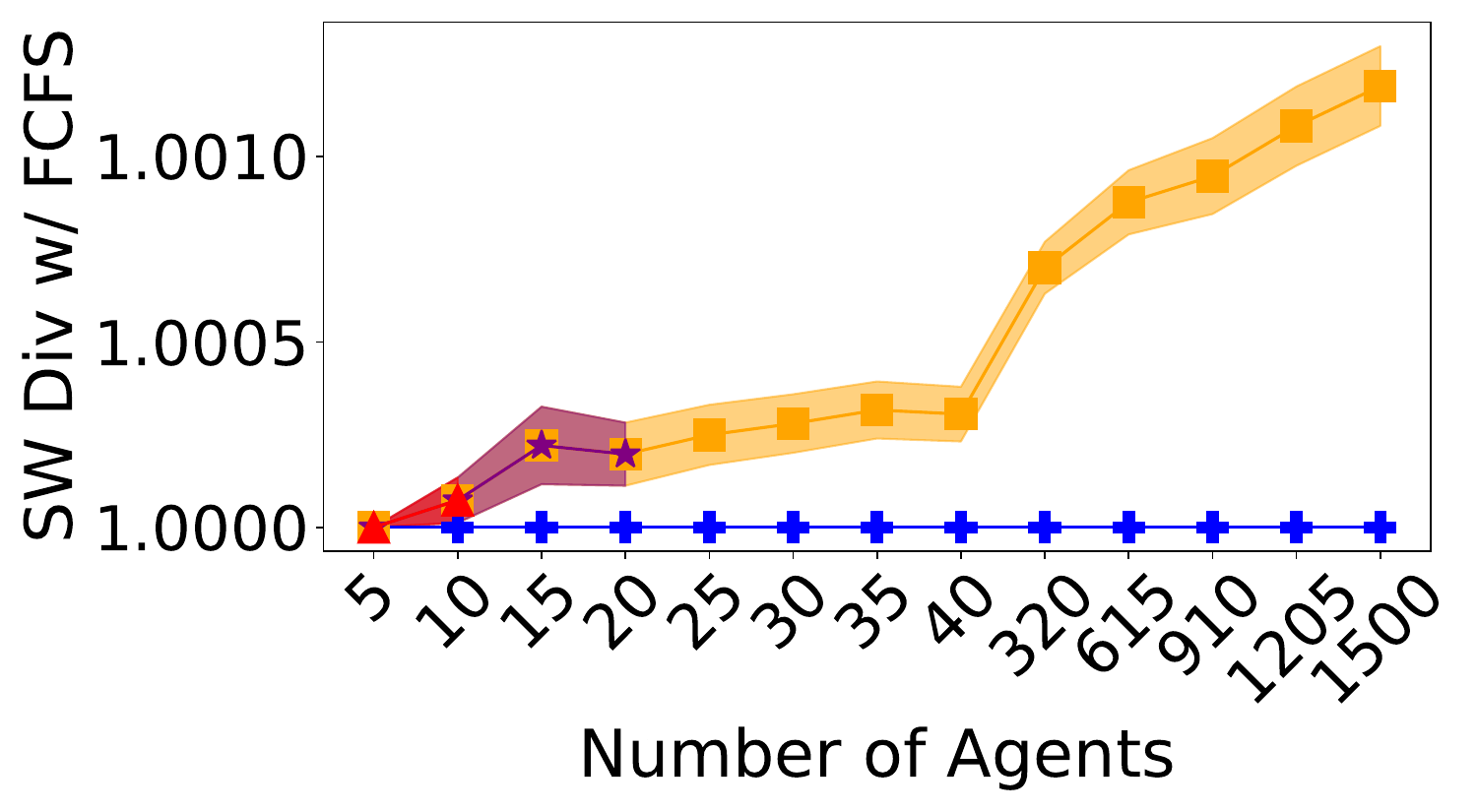}
        \label{fig:den520d-layer=1-welfare-subopt}
    \end{subfigure}
    \\
    \begin{minipage}[b]{0.24\textwidth}
        \centering
        \textbf{\textit{random-32-32-20}}
    \end{minipage}
    \hfill
    \begin{minipage}[b]{0.24\textwidth}
        \centering
        \textbf{\textit{den312d}}
    \end{minipage}
    \hfill
    \begin{minipage}[b]{0.24\textwidth}
        \centering
        \textbf{\textit{Paris\_1\_256}}
    \end{minipage}
    \hfill
    \begin{minipage}[b]{0.24\textwidth}
        \centering
        \textbf{\textit{den520d}}
    \end{minipage}\\
    \caption{Success rate, runtime, and ratio-to-baseline of social welfare (SW) for PCBS, EPBS, MCPP and FCFS (the baseline). Solid lines indicate the average value over 100 instances, while the shaded area is the 95\% confidence interval. Maximum agent numbers are 3000 for \emph{random-32-32-20}, 1000 for \emph{den312d} and 1500 for \emph{Paris\_1\_256} and \emph{den520d}. Agent numbers between 5 and 40 are shown in higher granularity than between 40 and the maximum to illustrate scaling differences.}
    \label{fig:major-result}
\end{figure*}

\Cref{fig:compare-mcpp} compares the success rate, runtime and welfare over FCFS of MCPP with different sample sizes $\numsamples$ on the \emph{random-32-32-20} map. It demonstrates MCPP's ability to trade off scalability with solution quality as a function of sample size. As the number of agents and thus congestion increases, the average runtime of finding single shortest paths does too (for all mechanisms solving MAPF!), causing MCPP's apparent superlinear scaling.

\Cref{fig:major-result} compares our three mechanisms and FCFS on the standard MAPF metrics of runtime and success rate, and shows their achieved mean welfare over all instances as the ratio when divided over FCFS's welfare. If a mechanism for a given map and number of agents fails to solve all instances, we do not plot their welfare. All mechanisms, including FCFS which does not use payments, are strategyproof. FCFS, being essentially 1-sample MCPP without payments, serves as the baseline with fastest runtime and lowest welfare. PCBS, using the optimal CBS algorithm, shows the highest welfare but fails to scale to larger instances within our time limit. EPBS, using the suboptimal PBS algorithm, scales similarly. We would expect to see more differentiation in achieved welfare of these two mechanisms at settings with more agents and thus more collisions, where allocation decisions have a bigger impact on welfare. To observe this behaviour, one could parallelize PCBS, EPBS and MCPP (see \Cref{SEC:MECHA}) or increase the runtime limit.
The figure demonstrates the exceptional scalability of MCPP compared to the search-based PCBS and EPBS while improving over the welfare baseline. The magnitude of welfare difference between the mechanisms depends on the choice of cost and value distributions and can be increased at will by e.g. employing higher variance distributions (see appendix).

\Cref{fig:paymenthist} shows the size of the VCG-based payments for a fixed mechanism, map and number of agents. We observe that payments are indeed non-negative, and overwhelmingly zero or very small. This indicates that in this setting, agents cause few externalities to others.

\begin{figure} 
    \centering
    \includegraphics[width=0.34\textwidth]{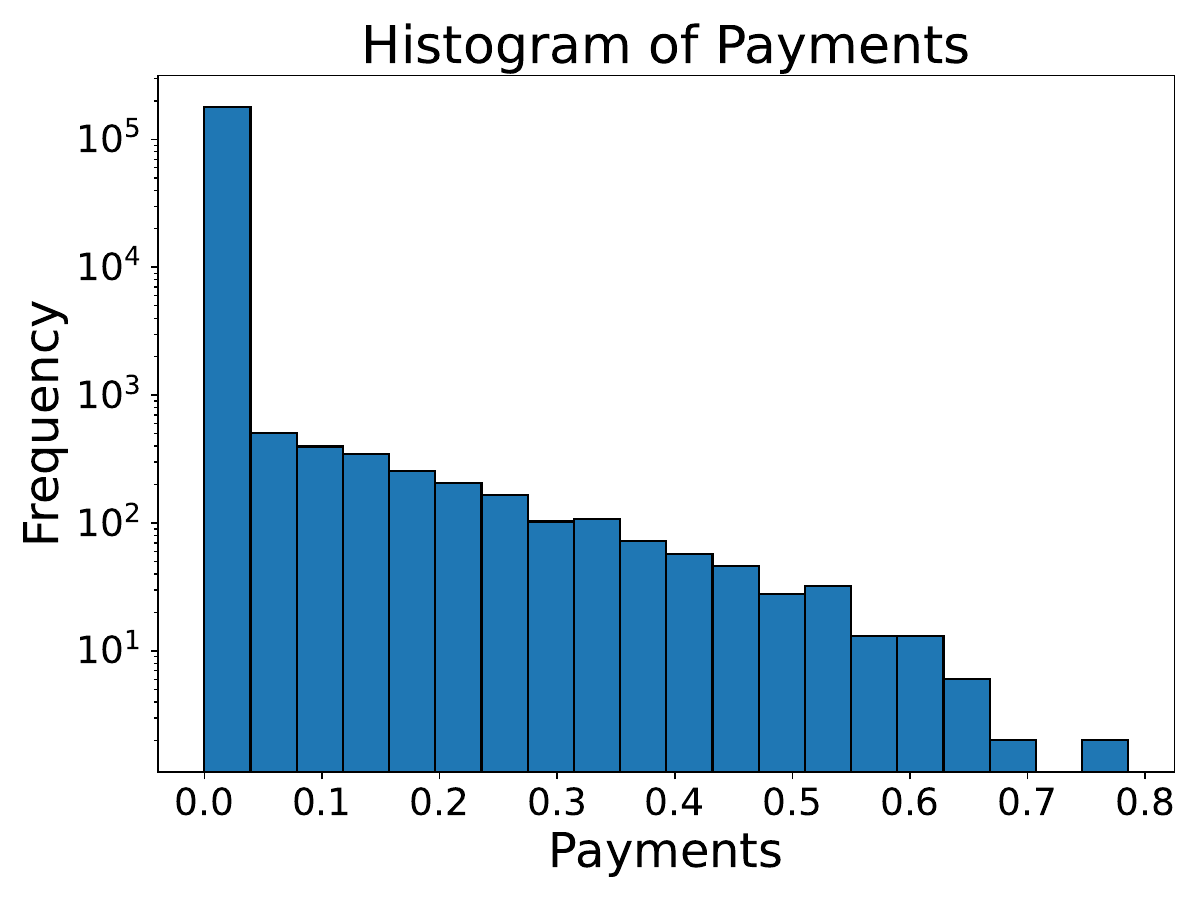}
    \caption{Size of payments for the MCPP mechanism, on the \emph{random-32-32-20} map at $\numagents=1810$ agents. Frequency in the y-axis is scaled logarithmically.}
    \label{fig:paymenthist}
\end{figure}

\section{Conclusion}\label{SEC:CONC}
We introduced a domain that bridges the gap between \emph{mechanism design} and MAPF, where paths are allocated in the standard MAPF formulation but agents might make self-interested misreports of their costs and values. 

We presented three mechanisms -- PCBS, EPBS and MCPP -- that are \emph{strategyproof}, \emph{individually rational} and have \emph{no negative payments}, and which combine MAPF allocation algorithms with a VCG-based payment rule. 

We showed how using the \emph{maximal-in-range (MIR)} property, suboptimal MAPF allocation mechanisms like EPBS and MCPP can be made strategyproof. 

Future research could find more scalable MAPF algorithms that fulfil MIR or investigate improving the welfare of suboptimal MIR-fulfilling MAPF algorithms by adjusting the range using non-misreportable agent type information.

\bibliographystyle{named}
\bibliography{ms}

\clearpage

\appendix

\section{Proofs}
\subsection{Proof of \Cref{thm:mir}}
This section follows \citet{nisan2007computationally}. 

Let $\outcomeset$ be a finite set of all allowed path allocations. For example, $\outcomeset$ may be the set of all feasible combinations of $\numagents$ paths on $(\vertexset, \edgeset)$ or a strict subset of it. Let $\outcome = (\agentpath_1,\ldots,\agentpath_\numagents) \in \outcomeset$ be an allocation, $\reportedtype_i=(\start_i,\goal_i,\reportedcost_i, \reportedval_i)$ be agent $i$'s reported type, and $\hat{\welfare}_i(\outcome):=\max(0,\reportedval_i - \reportedcost_i|\agentpath_i|)$ agent $i$'s resulting reported welfare function. We write $\reportedtype_{-i} = (\reportedtype_1,\ldots,\reportedtype_{i-1},\reportedtype_{i+1},\ldots,\reportedtype_\numagents)$ for the tuple containing all reported types excluding $\reportedtype_i$, and $\reportedtype = (\reportedtype_1,\ldots,\reportedtype_\numagents)$ or $\reportedtype = (\reportedtype_i,\reportedtype_{-i})$ for the tuple containing all reported types.

\begin{definition}
    A \emph{VCG mechanism} is a mechanism $(\allocrule,\payment)$, such that
    \begin{itemize}
        \item The allocation rule $\allocrule$ maximizes the total reported welfare over the set of allowed allocations $\outcomeset$, that is, 
        $$\forall \reportedtype=(\reportedtype_1,\ldots,\reportedtype_\numagents): f(\reportedtype) \in \argmax_{\outcome \in D} \sum_{i=1}^\numagents \reportedwelfare_i(\outcome)$$
        \item The payment rule $\payment(\reportedtype) = \bigl(\payment_1(\reportedtype),\ldots,\payment_\numagents(\reportedtype)\bigr)$ charges agent $i$ the amount according to the \emph{VCG-formula}
        $$\payment_i(\reportedtype):= -\sum_{j\neq i}\welfare_j \bigl(f(\reportedtype_i,\reportedtype_{-i})\bigr) + \vcgfunc_i(\reportedtype_{-i}),$$
    \end{itemize}
    where $\vcgfunc_i(\cdot)$ is an arbitrary function of $\reportedtype_{-i}$.
\end{definition}

If we drop the requirement that the allocation rule $f$ should maximize welfare over all allowed outcomes $\outcomeset$, we obtain \emph{VCG-based} mechanisms.
\begin{definition}
    A \emph{VCG-based mechanism} is a mechanism $(\allocrule,\payment)$, such that
    \begin{itemize}
        \item The allocation rule $\allocrule$ maps reported types into the set of allowed outcomes $\outcomeset$.
        \item The payment rule charges agent $i$ the amount according to the \emph{VCG-formula}.
    \end{itemize}
\end{definition}

The utility of an agent $i$ under a VCG-based mechanism is given by $\utility_i(\reportedtype) = \welfare_i\bigl(f(\reportedtype)\bigr) - \payment_i(\reportedtype)$. By inserting the payment definition via the VCG-formula, the utility becomes
$$\utility_i(\reportedtype_i,\reportedtype_{-i}) = \welfare_i\bigl(\allocrule(\reportedtype_i,\reportedtype_{-i})\bigr) + \sum_{j \neq i} \reportedwelfare_j \bigl(f(\reportedtype_i,\reportedtype_{-i})\bigr)  - \vcgfunc(\reportedtype_{-i}).$$

\begin{proposition}\label{thm:vcg-sp}
    Let $(\allocrule,\payment)$ be a VCG-based mechanism. Assuming that $\allocrule$ produces the outcome which maximizes reported welfare over $\outcomeset$ (i.e., $(\allocrule,\payment)$ is a VCG-mechanism), then each agent can maximize her utility by truthfully reporting $\reportedtype_i := \agenttype_i$, such that $\reportedwelfare_i = \welfare_i$.
\end{proposition}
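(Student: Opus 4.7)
The plan is to follow the classical VCG incentive argument, adapted to the paper's notation where $\welfare_i$ denotes agent $i$'s \emph{true} welfare and $\reportedwelfare_i$ denotes her \emph{reported} welfare. The starting point is the utility expression already derived in the text,
\begin{align*}
\utility_i(\reportedtype_i,\reportedtype_{-i}) = \welfare_i\bigl(\allocrule(\reportedtype_i,\reportedtype_{-i})\bigr) + \sum_{j\neq i}\reportedwelfare_j\bigl(\allocrule(\reportedtype_i,\reportedtype_{-i})\bigr) - \vcgfunc_i(\reportedtype_{-i}).
\end{align*}
The final term depends only on $\reportedtype_{-i}$, so it is constant from agent $i$'s point of view when choosing her report. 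Thus maximizing $\utility_i$ in $\reportedtype_i$ is equivalent to maximizing the expression $\welfare_i(\allocrule(\reportedtype_i,\reportedtype_{-i})) + \sum_{j\neq i}\reportedwelfare_j(\allocrule(\reportedtype_i,\reportedtype_{-i}))$.

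Next I would fix an arbitrary alternative report $\reportedtype_i'$ and the truthful report $\reportedtype_i=\agenttype_i$ (so $\reportedwelfare_i=\welfare_i$), and denote the resulting allocations by $\outcome^\ast:=\allocrule(\agenttype_i,\reportedtype_{-i})$ and $\outcome':=\allocrule(\reportedtype_i',\reportedtype_{-i})$. Since $(\allocrule,\payment)$ is assumed to be a VCG mechanism, $\allocrule$ maximizes total reported welfare over $\outcomeset$. Under truthful reporting by $i$, total reported welfare equals $\welfare_i(\outcome)+\sum_{j\neq i}\reportedwelfare_j(\outcome)$, and $\outcome^\ast$ is a maximizer of this expression over all $\outcome\in\outcomeset$. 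In particular, comparing with $\outcome'\in\outcomeset$ yields the one-line inequality
\begin{align*}
\welfare_i(\outcome^\ast) + \sum_{j\neq i}\reportedwelfare_j(\outcome^\ast) \;\geq\; \welfare_i(\outcome') + \sum_{j\neq i}\reportedwelfare_j(\outcome').
\end{align*}

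Subtracting $\vcgfunc_i(\reportedtype_{-i})$ from both sides recovers the utilities and gives $\utility_i(\agenttype_i,\reportedtype_{-i})\geq \utility_i(\reportedtype_i',\reportedtype_{-i})$ for every $\reportedtype_i'$ and every $\reportedtype_{-i}$, which is exactly dominant-strategy truthfulness. The only conceptually delicate point, and the one I would flag as the main obstacle in the write-up, is keeping the bookkeeping straight: the allocation rule optimizes the \emph{reported} objective $\sum_j\reportedwelfare_j$, but agent $i$'s payoff contains her \emph{true} $\welfare_i$; the argument works precisely because truthful reporting causes these two quantities to coincide on coordinate $i$, aligning the mechanism's optimization target with what agent $i$ cares about. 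No further machinery is needed beyond this substitution and the $\argmax$ property of $\allocrule$.
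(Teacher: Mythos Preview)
Your proof is correct and follows essentially the same route as the paper's: both fix $\reportedtype_{-i}$, observe that $\vcgfunc_i(\reportedtype_{-i})$ is a constant, and use that under truthful reporting $\allocrule(\agenttype_i,\reportedtype_{-i})$ maximizes $\welfare_i(\outcome)+\sum_{j\neq i}\reportedwelfare_j(\outcome)$ over $\outcomeset$ while $\allocrule(\reportedtype_i',\reportedtype_{-i})\in\outcomeset$ is merely feasible. The paper packages this via an auxiliary function $g(\outcome):=\welfare_i(\outcome)+\sum_{j\neq i}\reportedwelfare_j(\outcome)$ and writes the utility difference directly, but the content is identical.
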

\begin{proof}
    To see this, assume that agent $i$ reports truthfully, meaning that $\reportedtype_i := \agenttype_i$ and thus $\reportedwelfare_i = \welfare_i$. 
    For $\outcome\in\outcomeset$, define $g(\outcome):= \welfare_i(\outcome) + \sum_{j \neq i} \reportedwelfare_j (\outcome).$ Fixing all other agents' reports $\reportedtype_{-i}$, for all possible misreports $\agenttype_i' \neq \agenttype_i$ which agent $i$ could make,
    \begin{equation*}
    \begin{split}
        \utility_i(\agenttype_i,\reportedtype_{-i}) &- \utility_i(\agenttype_i',\reportedtype_{-i}) \\
        &= \welfare_i(\allocrule(\agenttype_i,\reportedtype_{-i})) + \sum_{j \neq i} \reportedwelfare_j \bigl(f(\agenttype_i,\reportedtype_{-i})\bigr) \\
        &- \welfare_i(\allocrule(\agenttype_i',\reportedtype_{-i})) + \sum_{j \neq i} \reportedwelfare_j \bigl(f(\agenttype_i',\reportedtype_{-i})\bigr)\\
        &- \bigl(\vcgfunc_i(\reportedtype_{-i}) - \vcgfunc_i(\reportedtype_{-i})\bigr)\\
        &= g\bigl(\allocrule(\agenttype_i,\reportedtype_{-i})\bigr) - g\bigl(\allocrule(\agenttype_i',\reportedtype_{-i})\bigr)\\
        &\geq 0,
    \end{split}
    \end{equation*}
    where the inequality follows by $\allocrule$'s welfare maximisation $\allocrule(\agenttype_i,\reportedtype_{-i}) \in \argmax_{\outcome \in \outcomeset} g(\outcome)$ and since $\allocrule(\agenttype_i',\reportedtype_{-i}) \in \outcomeset$. Therefore, reporting truthfully always maximizes an individual agent's utility and thus $(\allocrule,\payment)$ is strategyproof.
\end{proof}
We now prove \Cref{thm:mir}, recalling that we defined the VCG-based payment rule as
    $$\payment_i(\reportedtype):= \reportedwelfare_{-i}(\outcome^*_{-i})-\reportedwelfare_{-i}(\outcome^*),$$
    where
    \begin{align*}
        \outcome^\ast &:= \argmax_{\outcome \in \range} \sum_i \reportedwelfare_i (\outcome) \\
        \outcome_{-i}^\ast &:= \argmax_{\outcome\in\range} \sum_{j \neq i} \reportedwelfare_j (\outcome).
    \end{align*}
\mir*
\begin{proof}
    An agent's utility is then
    \begin{align*} 
        \utility_i(\reportedtype) &= \welfare_i\bigl(\allocrule(\reportedtype)\bigr) - \payment_i\bigl(\allocrule(\reportedtype)\bigr)\\
        &= \welfare_i\bigl(\allocrule(\reportedtype)\bigr) + \sum_{j \neq i} \reportedwelfare_j(\outcome^\ast) - \sum_{j \neq i} \reportedwelfare_j(\outcome_{-i}^\ast).
    \end{align*}
    The mechanism $(\allocrule,\payment)$ thus fulfills the definition of a \emph{VCG-based} mechanism where 
    $$\vcgfunc_i(\reportedtype_{-i}) := \sum_{j \neq i} \reportedwelfare_j (\outcome_{-i}^\ast).$$
    By definition of MIR, $\allocrule$ produces the outcome which maximizes reported welfare over $\range$, which in particular ensures that all outputs of $\allocrule$ lie within $\range$. The mechanism $(\allocrule,\payment)$ is thus a \emph{VCG mechanism}, whose set of allowable outputs $\outcomeset$ is the range $\range$. Following \Cref{thm:vcg-sp}'s calculation with $\allocrule(\agenttype_i,\reportedtype_{-i}) \in \argmax_{\outcome \in \range} g(\outcome)$ and $\allocrule(\agenttype_i',\reportedtype_{-i}) \in \range$, we see that agents which report truthfully always maximize their utility.
\end{proof}

\subsection{Proof of \Cref{thm:ir-negpayments}}
\irnegpayments*
\begin{proof}
    Proving (i): We show that $(\allocrule,\payment)$ ensures that each agent $i$ has a non-negative utility $\utility_i = \welfare_i - \payment_i \geq 0$ for all possible agent reports $\reportedtype$:

    \begin{align}
        \utility_i(\reportedtype) &= \reportedwelfare_i(\outcome^\ast) - \Bigl(\sum_{j \neq i} \reportedwelfare_j(\outcome^\ast_{-i}) - \sum_{j \neq i} \reportedwelfare_j(\outcome^\ast)\Bigr) \\
        &= \sum_{i} \reportedwelfare_i(\outcome^\ast) - \sum_{j \neq i} \reportedwelfare_j(\outcome^\ast_{-i}) \\
        &\geq \sum_{i} \reportedwelfare_i(\outcome^\ast) - \sum_{j \neq i} \reportedwelfare_j(\outcome^\ast_{-i}) - \reportedwelfare_i(\outcome^\ast_{-i}) \label{eq:epbs-ir-1} \\ 
        &= \sum_{i} \reportedwelfare_i(\outcome^\ast) - \sum_{i} \reportedwelfare_i(\outcome^\ast_{-i}) \geq 0. \label{eq:epbs-ir-2}
    \end{align}

    Where \cref{eq:epbs-ir-1} uses that due to our definition of $\cost_i(\agentpath_i)$, an agent's (reported) welfare is always non-negative, and non-negativity in \cref{eq:epbs-ir-2} follows due to $\outcome^\ast$ by definition being the assignment which maximizes $\sum_i \reportedwelfare_i(\outcome)$ for all $\outcome\in\outcomeset$.

    Proving (ii): We have for all $i$ and reported types $\reportedtype$:
    \begin{align*} 
        \payment_i (\reportedtype) &= \reportedwelfare_{-i} (\outcome^\ast_{-i}) - \reportedwelfare_{-i}(\outcome^\ast) \\
        &= \sum_{j \neq i} \reportedwelfare_j(\outcome^\ast_{-i}) - \sum_{j \neq i} \reportedwelfare_j(\outcome^\ast) \geq 0
    \end{align*}
    since $\outcome^\ast_{-i}$ is by definition the assignment that maximizes $\sum_{j \neq i} \reportedwelfare_j(\outcome)$ for all $\outcome \in \outcomeset$.
\end{proof}

\section{Additional Results}
In \Cref{fig:major-result-3D}, we test our mechanisms in 3D maps, modeling a UAV airspace assignment problem. We build the 3D maps by stacking the corresponding 2D maps from the MAPF benchmark~\citet{sternMultiAgentPathfindingDefinitions2019} that we use in our other experiments vertically, resulting in 3D maps of height five. MCPP with 100 samples proves its scalability by providing solutions with a 100\% success rate up to thousands of agents on the \emph{3D random-32-32-20} and \emph{3D Paris-1-256} maps with a runtime limit of 3600s. It considerably improves welfare over FCFS while being close to optimal for lower agent levels. Due to the added degrees of freedom in 3D MAPF problems, higher amounts of agents are required to generate sufficiently many collisions for FCFS' random priority ordering allocation to be notably suboptimal. This is illustrated by the very small welfare differential between FCFS and the (optimal) PCBS in the \emph{3D Paris-1-256} map even with many agents, and the larger welfare gain in the much more constrained \emph{3D random-16-16} map, obtained by truncating the \emph{3D random-32-32-20} map. The experiment for \emph{3D random-16-16}, unlike all others in this paper, was run on an AMD Ryzen Threadripper 3990X with 192 GB of RAM. 

\Cref{fig:cost-dist_vs_welfare} illustrates how choosing non-uniform cost and value distributions and scaling the distributions differently allows one to increase the magnitude of the welfare differential between our mechanisms (illustrated by MCPP) over FCFS at will. Uniform distributions, which we use for the main contents' experiments, are a standard assumption in MAPF and Mechanism Design. In this experiment on \emph{random-32-32-20} at 1800 agents, we fix the value distribution as log-normal $\mathcal{LN}(0,1)$ and vary the cost distribution from uniform to $\mathcal{LN}(0,1)$, $\mathcal{LN}(0,0.75^2)$, $\mathcal{LN}(0,0.5^2)$ and $\mathcal{LN}(0,0.25^2)$.

\begin{figure*}[!t]
    \centering
    \offinterlineskip
    \includegraphics[width=0.85\textwidth]{fig/legend.pdf}

    \label{fig:legend-3d}
    \begin{subfigure}{0.42\textwidth}
        \centering
        \offinterlineskip
        \includegraphics[width=1\textwidth]{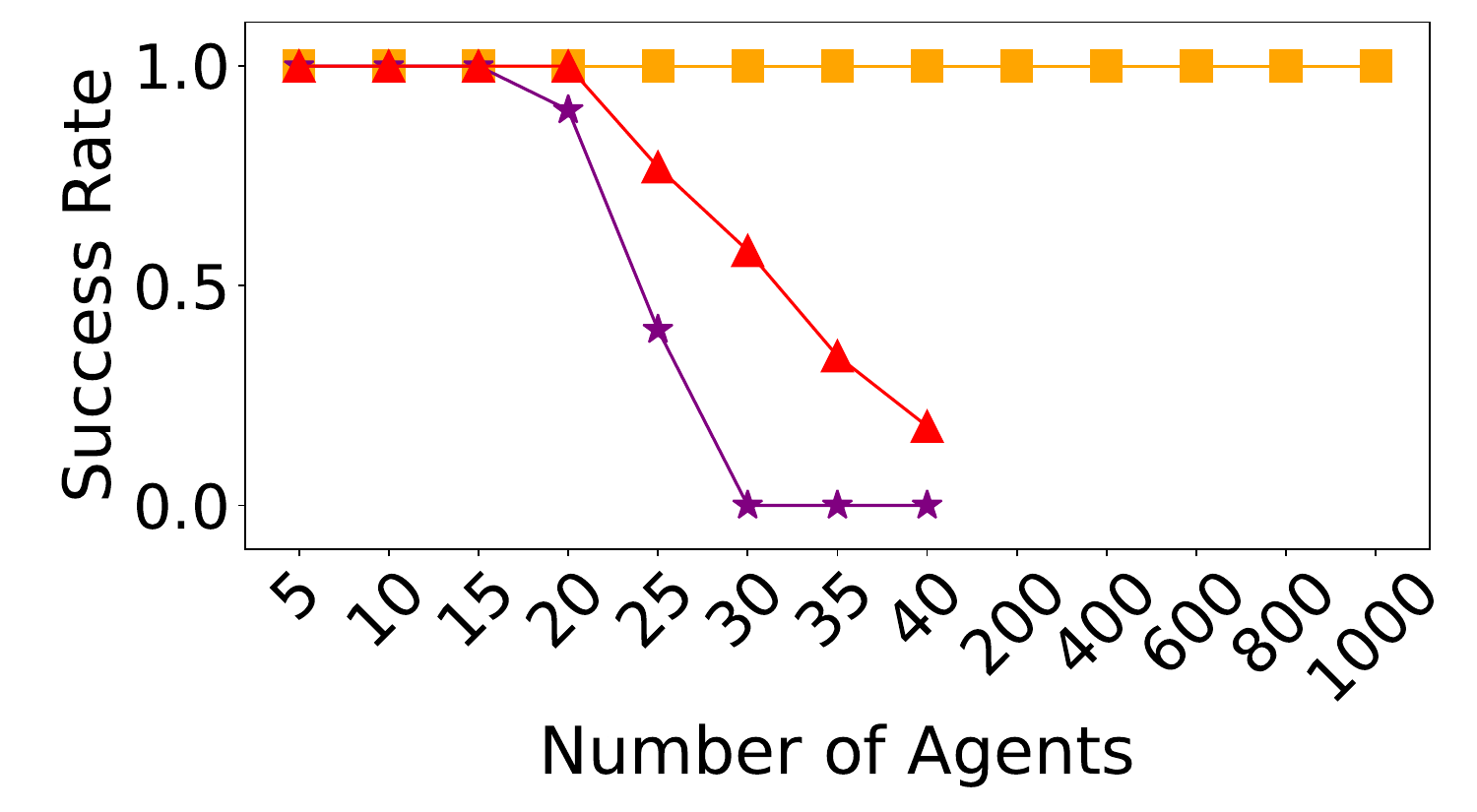}
        \label{fig:random-16-16-layer=5-success}
    \end{subfigure}
    \hspace{2em}
    \begin{subfigure}{0.42\textwidth}
        \centering
        \offinterlineskip
        \includegraphics[width=1\textwidth]{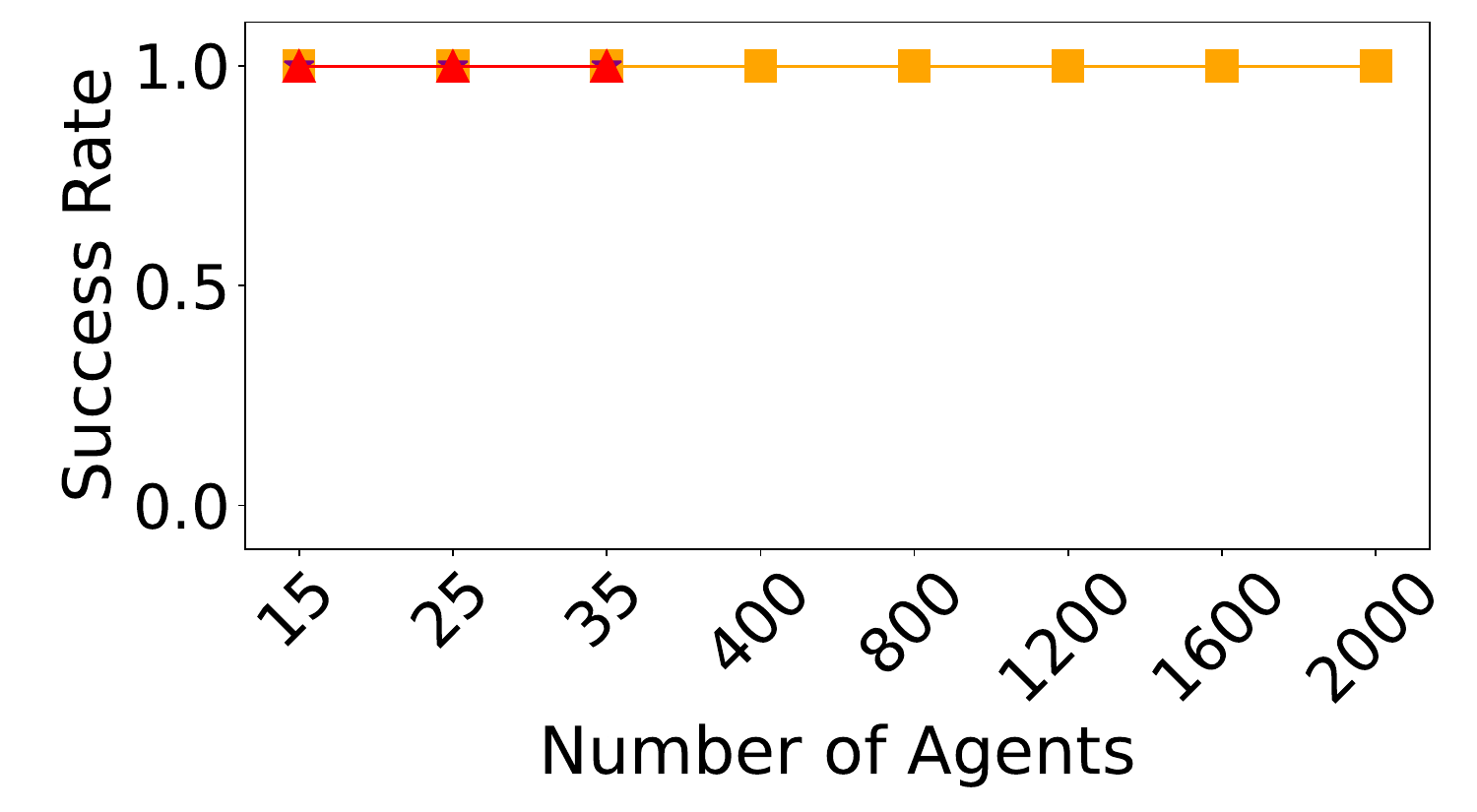}
        \label{fig:Paris-1-256-layer=5-success}
    \end{subfigure}
    \\
    \begin{subfigure}{0.42\textwidth}
        \centering
        \offinterlineskip
        \includegraphics[width=1\textwidth]{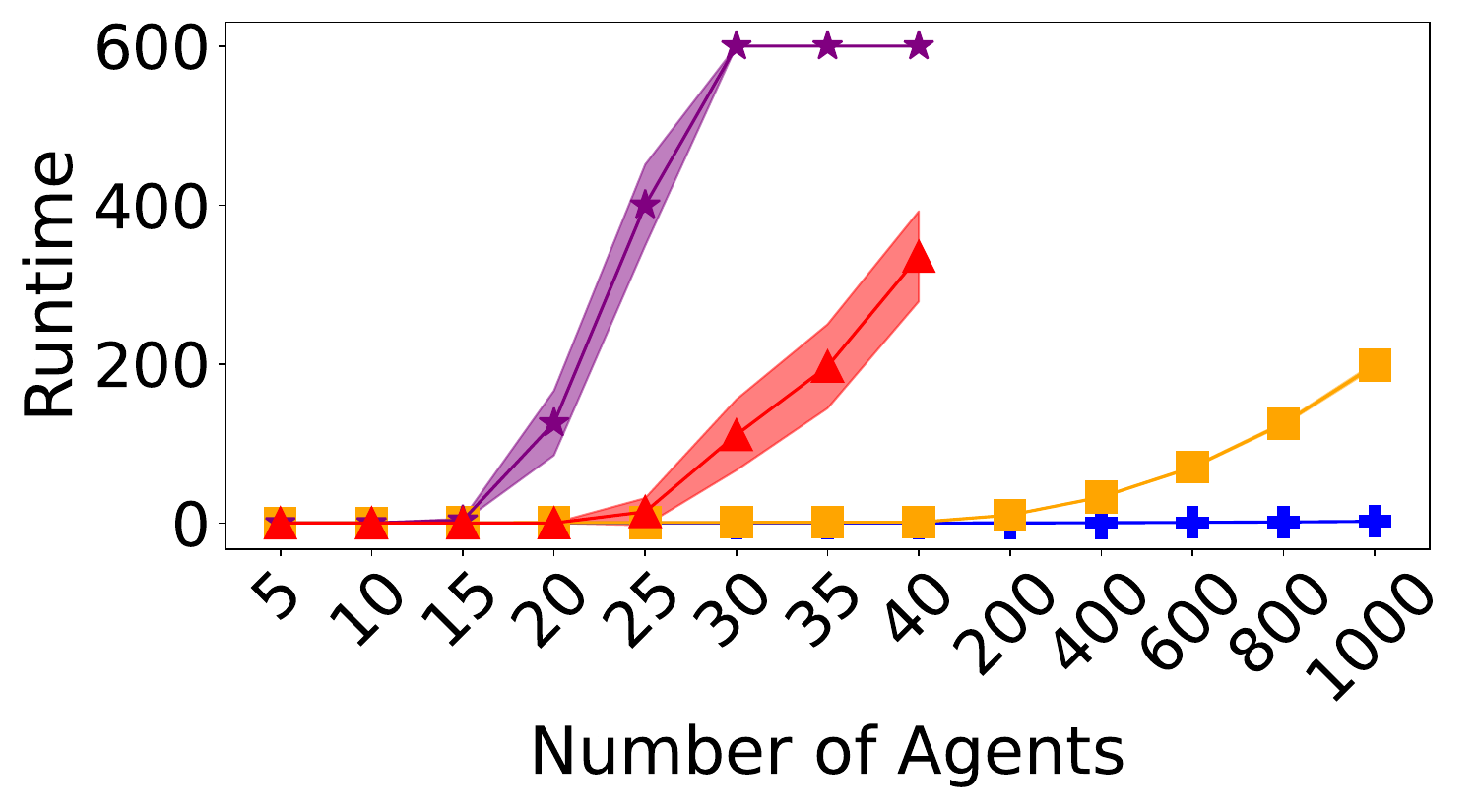}
        \label{fig:random-16-16-layer=5-runtime}
    \end{subfigure}
    \hspace{2em}
    \begin{subfigure}{0.42\textwidth}
        \centering
        \offinterlineskip
        \includegraphics[width=1\textwidth]{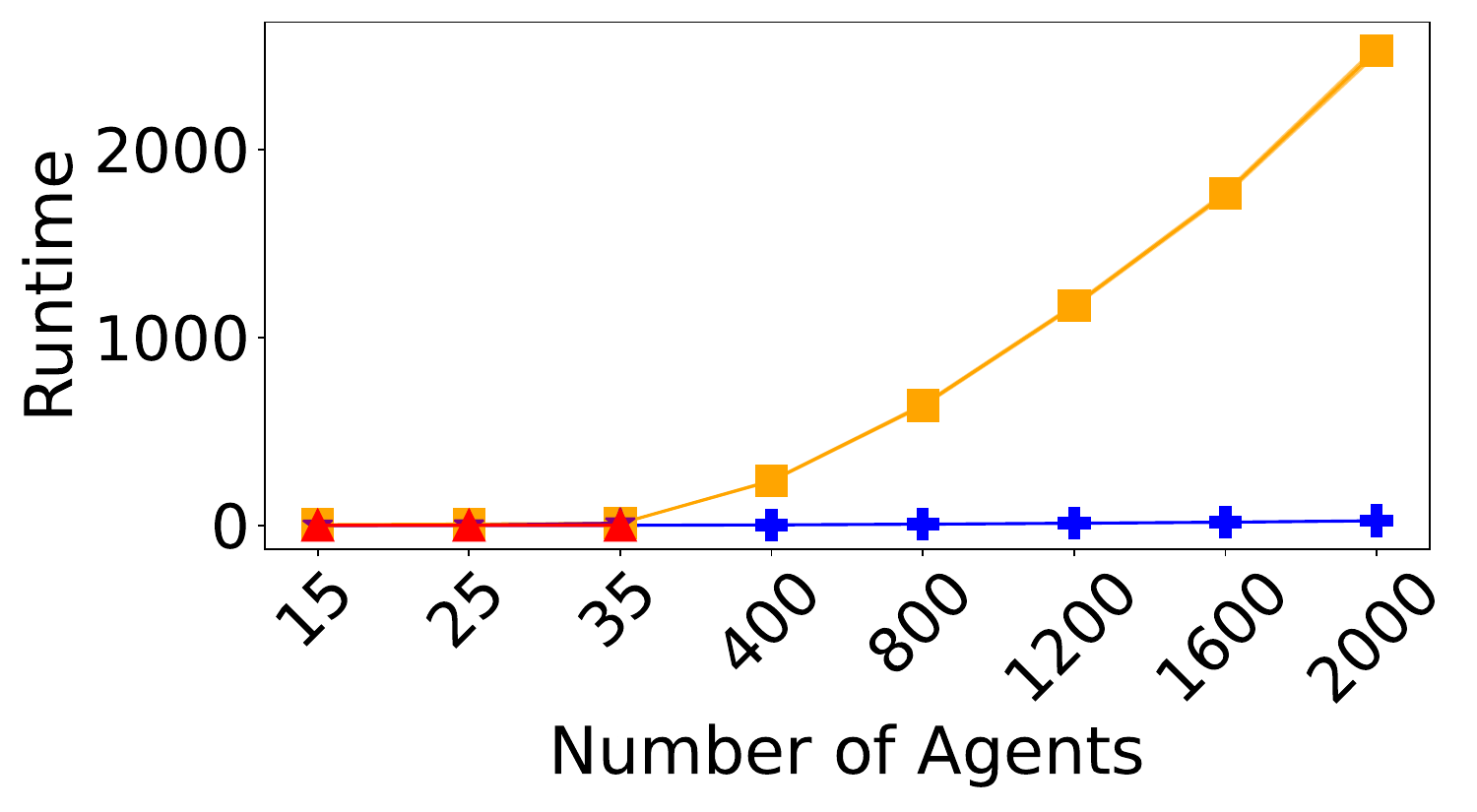}
        \label{fig:Paris-1-256-layer=5-runtime}
    \end{subfigure}
    \\
    \begin{subfigure}{0.42\textwidth}
        \centering
        \offinterlineskip
        \includegraphics[width=1\textwidth]{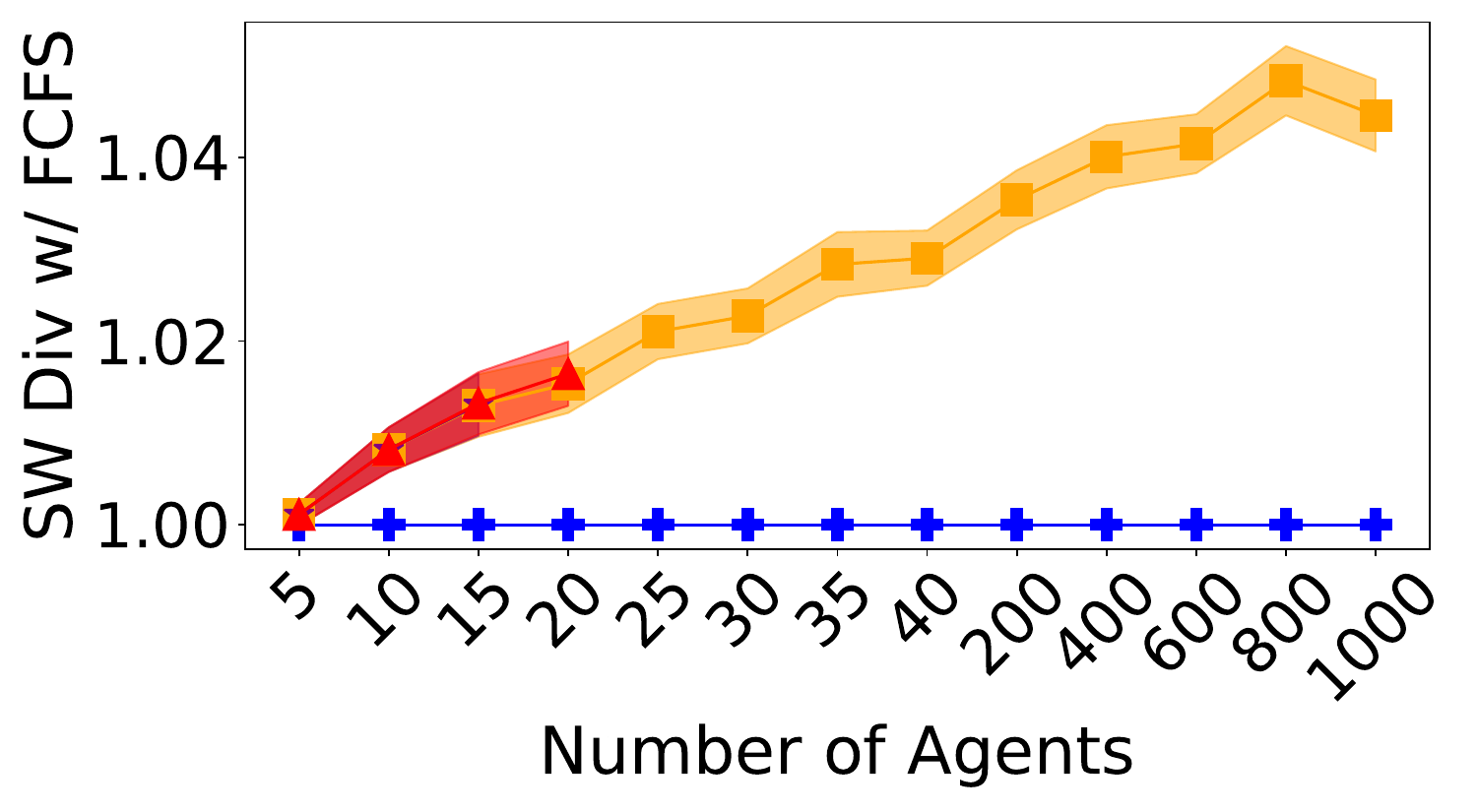}
        \label{fig:random-16-16-layer=5-welfare-subopt}
    \end{subfigure}
    \hspace{2em}
    \begin{subfigure}{0.4\textwidth}
        \centering
        \offinterlineskip
        \includegraphics[width=1\textwidth]{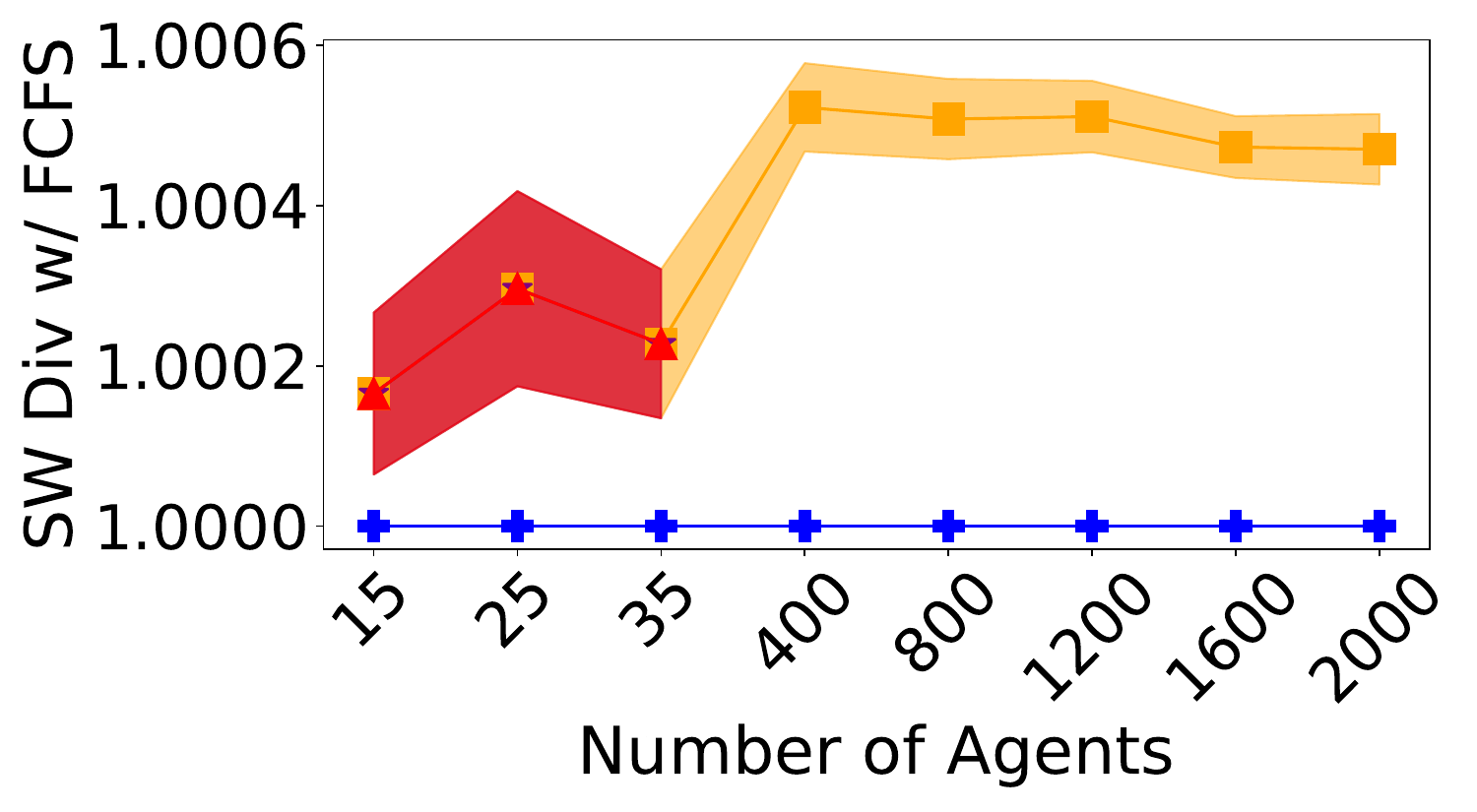}
        \label{fig:Paris-1-256-layer=5-welfare-subopt}
    \end{subfigure}
    \\
    \begin{minipage}[b]{0.4\textwidth}
        \centering
        \textbf{3D \textit{random-16-16}}
    \end{minipage}
    \hspace{2em}
    \begin{minipage}[b]{0.4\textwidth}
        \centering
        \textbf{3D \textit{Paris\_1\_256}}
    \end{minipage}\\
    \caption{Success rate, runtime, and ratio-to-baseline of social welfare for PCBS, EPBS, MCPP and FCFS (the baseline) in 3D versions of our benchmark maps. Solid lines indicate the average value over 100 instances, while the shaded area is the 95\% confidence interval. If a mechanism fails to achieve a 100\% success rate, we do not plot their welfare for that number of agents.}
    \label{fig:major-result-3D}
\end{figure*}

\begin{figure}
    \centering
    \includegraphics[width=0.45\textwidth]{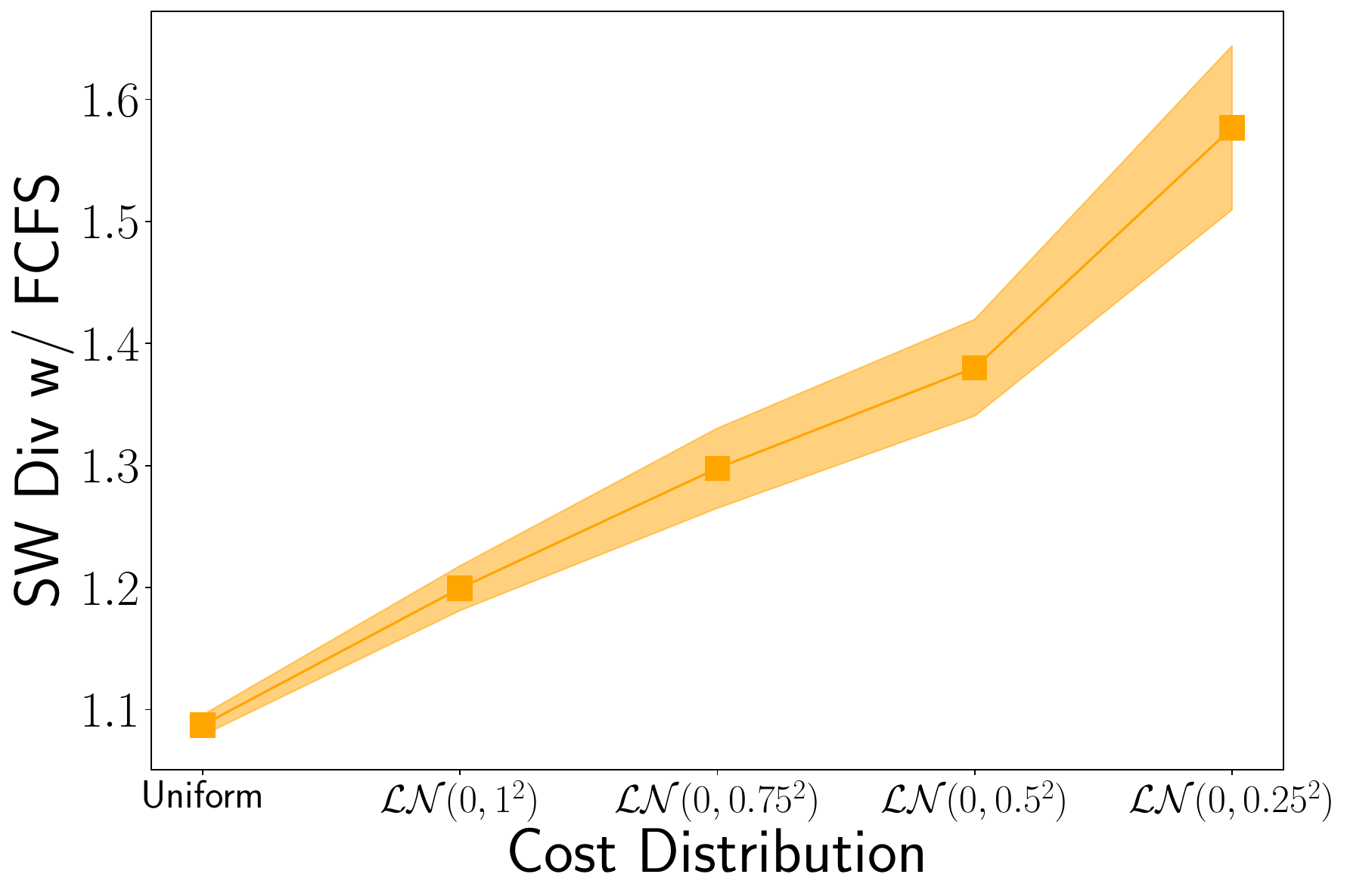}
    \caption{Social welfare suboptimality in \textit{random-32-32-20} map with MCPP, at 1800 agents. All runs use a standard log-normal value distribution. We vary their cost distributions from uniform ones to differently scaled log-normal ones.}
    \label{fig:cost-dist_vs_welfare}
\end{figure}
\end{document}